\documentclass[twoside]{article} 

\usepackage[accepted]{aistats2017}
\usepackage[utf8]{inputenc} 
\usepackage[T1]{fontenc}    
\usepackage{url}            
\usepackage{booktabs}       
\usepackage{amsfonts}       
\usepackage{nicefrac}       
\usepackage{microtype}      

\usepackage{xspace}
\usepackage{amssymb,amsmath,amsthm}
\usepackage{algorithm}
\usepackage{algorithmic}
\usepackage{natbib}
\usepackage{float}
\usepackage{wrapfig}

\usepackage{relsize}
\usepackage[colorinlistoftodos, textwidth=17mm, disable]{todonotes}
\usepackage{color}
\definecolor{babyblue}{rgb}{0.54, 0.81, 0.94}

\newcommand{\todomout}[1]{\todo[color=babyblue]{\tiny#1}}

\newlength{\minipagewidth}
\newlength{\minipagewidthx}
\newcommand{\bookboxxx}[1]{\small
\par\medskip\noindent
\framebox[0.45\textwidth]{
\begin{minipage}{0.42\dimexpr\textwidth-\parindent\relax} {#1} \end{minipage} } \par\medskip }

\newcommand{\forcing}{\textsc{\small{ForcingBalance}}\xspace}
\newcommand{\forcingshort}{\textsc{\small{Force}}\xspace}
\newcommand{\gafs}{\textsc{\small{GAFS-MAX}}\xspace}
\newcommand{\gafsshort}{\textsc{\small{GAFS}}\xspace}
\newcommand{\naive}{$(\mu,\sigma)$-\textsc{\small{Naive-UCB}}\xspace}
\newcommand{\ucb}{\textsc{\small{UCB}}\xspace}

\newcommand{\calP}{\mathcal P}

\newcommand{\D}{\mathcal D}

\newcommand{\E}{\mathbb E}

\renewcommand{\epsilon}{\varepsilon}

\newcommand{\var}{\sigma^2}

\newcommand{\eps}{\varepsilon}

\newcommand{\blambda}{\boldsymbol{\lambda}}

\newcommand{\CommaBin}{\mathbin{\raisebox{0.5ex}{,}}}

\newcommand{\wt}{\widetilde}
\newcommand{\wh}{\widehat}
\newcommand{\wb}{\overline}

\newtheorem{lemma}{Lemma}
\newtheorem{assumption}{Assumption}
\newtheorem{corollary}{Corollary}
\newtheorem{proposition}{Proposition}
\newtheorem{theorem}{Theorem}

\begin{document}

%

%

\twocolumn[

\aistatstitle{Trading off Rewards and Errors in Multi-Armed Bandits}

\aistatsauthor{ Akram Erraqabi  \And Alessandro Lazaric \And Michal Valko  \And Emma Brunskill \And Yun-En Liu  }

\aistatsaddress{Inria SequeL\And Inria SequeL \And Inria SequeL \And CMU \And EnLearn} ]

\begin{abstract}
In multi-armed bandits, the most common objective is the maximization of the cumulative reward. Alternative settings include active exploration, where a learner tries to gain accurate estimates of the rewards of all arms. While these objectives are contrasting, in many scenarios it is desirable to trade off rewards and errors. For instance, in educational games the designer wants to gather generalizable knowledge about the behavior of the students and teaching strategies (small \textit{estimation errors}) but, at the same time, the system needs to avoid giving a bad experience to the players, who may leave the system permanently (large \textit{reward}). In this paper, we formalize this tradeoff and introduce the \forcing algorithm whose performance is provably close to the best possible tradeoff strategy. Finally, we demonstrate on real-world educational data that \forcing returns \textit{useful} information about the arms without compromising the overall reward.
\end{abstract}



\vspace{-0.1in}
\section{Introduction}\label{s:intro}
\vspace{-0.1in}

We consider sequential, interactive systems when a learner aims at optimizing an objective function whose parameters are initially unknown and need to be estimated over time.
We take the
multi-armed bandit (MAB) framework where the learner has access to a finite
set of distributions (\textit{arms}), each one characterized
by an expected value (\textit{reward}).
The learner does not know the distributions beforehand and
it can only obtain a random sample by selecting an arm.
%
%
The most common objective in MAB is to minimize the regret, i.e.,  the difference between the reward
of the arm with the highest mean and the reward of the arms pulled by the learner.
Since the arm means are unknown,
this requires balancing \textit{exploration} of the arms and \textit{exploitation} of the mean estimates.
An alternative setting is \textit{pure exploration}, where the learner's performance is only evaluated upon the termination of
the process, and
its learning performance is allowed to be arbitrarily bad in terms of rewards accumulated over time.
In 
\emph{best-arm identification}~\citep{even-dar2006action,audibert2010best}, the learner selects arms to find the optimal arm either with very high probability or in a short number of steps. In \textit{active
exploration}~\citep{antos2010active,carpentier2011upper-confidence-bound},
the objective is to estimate the value
of all arms as accurately as possible. This setting, which is related to active learning and experimental optimal design, is particularly relevant whenever accurate predictions of the arms' value is needed to support decisions at a later time.

The previous objectives have been studied separately. However, they do not address the increasingly-prevalent situation where
users participate in research studies (e.g., for education or health) that are designed to collect reliable data and compute accurate estimates of the performance of the available options. 
Here, the subjects/users themselves rarely care about the underlying research questions but wish to gain their own benefit, such as students
seeking to learn new material, or patients seeking to
find improvement for their condition. In order to
serve these individuals and gather generalizable knowledge at the same time,
we formalize this situation as a multi-objective
bandit problem, where a designer seeks to
trade off cumulative regret minimization
(providing good direct reward for participants),
with informing scientific knowledge about the
strengths and limitations of the various conditions
(active exploration to estimate all arm means).
This tradeoff is especially needed in high-stakes domains,
such as medicine or education, or when running experiments
online, where poor experience may lead to users leaving the system permanently. 
A similar tradeoff happens in  A/B testing. Here, the designer may want to retain the ability to set a desired level of accuracy in estimating the value of different alternatives (e.g., to justify decisions that are to be taken \emph{posterior} to the experiment)  while still maximizing the reward.


A natural initial question is whether these two 
different objectives, reward maximization and 
accurate arm estimation, or other alternative 
objectives, like best arm identification, are mutually compatible:
Can one always recover the best of all objectives? 
Unfortunately, in general, the answer is negative. 
 \citet{bubeck2009pure} have already shown that any algorithm with sub-linear 
regret cannot be optimal for identifying the best arm.
Though it may not be possible to be simultaneously optimal
for both active exploration and reward maximization,
we wish to carefully trade off between these two objectives.
How to properly balance multiple objectives in MAB
is a mostly unexplored question. \citet{bui2011committing} introduce the \textit{committing bandits}, where a given horizon is divided into an experimentation phase when the learner is free to explore all the arms but still pays a cost, and a commit phase when the learner must choose one single arm that will be pulled until the end of the horizon. 
\citet{lattimore2015the-pareto} analyzes the problem where the learner wants to minimize the regret simultaneously w.r.t.\ two special arms. He shows that if the regret w.r.t.\ one arm is bounded by a small quantity $B$, then the regret w.r.t.\ the other arm scales at least as $1/B$, which reveals the difficulty of balancing two objectives at the same time. \citet{drugan2013designing} formalize the multi-objective bandit problem where each arm is characterized by multiple values and the learner should maximize a multi-objective function constructed over the values of each arm. They derive variations of  \ucb  to minimize the regret w.r.t.\ the full Pareto frontier obtained for different multi-objective functions. 
Finally, \citet{sani2012risk} study strategies having a small regret versus the arm with the best mean-variance tradeoff. In this case, they show that it is not always possible to achieve a small regret w.r.t.\ the arm with the best mean-variance.


In this paper, we study the tradeoff between cumulative reward and accuracy of estimation of the arms' values (i.e., reward maximization and active exploration), which was first introduced by~\citet{liu2014trading}. 
Their work presented a heuristic algorithm for balancing this tradeoff
and promising empirical results on an education simulation. In the present paper, we take a more rigorous approach and make several new contributions. 
\textbf{1)} We propose and justify a new objective function for the integration of rewards and estimation errors (Sect.~\ref{s:preliminaries}),
that provides a simple way for a designer to weigh directly between them.
\textbf{2)} We introduce the \forcing algorithm that optimizes the objective function when the 
arm distributions are unknown (Sect.~\ref{s:algorithms}).  
Despite its simplicity, we prove that \forcing
incurs a regret that asymptotically 
matches the minimax rate for cumulative regret minimization and the performance of active exploration algorithms (Sect.~\ref{s:theory}). 
This is very encouraging, as it shows that balancing a 
tradeoff between rewards and errors
is not fundamentally more difficult than either of these separate objectives. Interestingly, we also show 
that a simple extension of  
\ucb is not sufficient 
to achieve good performance.
\textbf{3)} Our analysis requires only requires strong convexity and smoothness of the objective function and therefore our algorithm and the proof technique can be easily extended.   
\textbf{4)} We provide 
empirical simulations on both synthetic and educational data from~\citet{liu2014trading} that support our analysis (Sect.~\ref{s:experiments}).
\section{Balancing Rewards and Errors}\label{s:preliminaries}


We consider a MAB of $K$ arms with distributions $\{\nu_i\}_{i=1}^K$, each characterized by mean $\mu_i$ and variance~$\sigma_i^2$. For technical convenience, we consider distributions with bounded support in $[0,1]$. All the following results extend to the general case of sub-Gaussian distributions (used in the experiments). 
We denote  the $s$-th i.i.d.\ sample drawn from $\nu_i$ by $X_{i,s}$ and we define $[K]=\{1,\ldots,K\}$. As discussed in the introduction, we study the combination of two objectives: reward maximization and estimation error minimization. Given a fixed sequence of $n$ arms $\mathcal{I}_n = (I_1, I_2, .., I_n)$, where $I_t\in[K]$ is the arm pulled at time $t$, the average reward is defined as
\begin{align}\label{eq:avg.reward}
\rho(\mathcal{I}_n) = \E\bigg[\frac{1}{n} \sum_{t=1}^n X_{I_t, T_{I_t,t}} \bigg] = \frac{1}{n}\sum_{i=1}^K T_{i,n} \mu_i,
\end{align}
where $T_{i,t} = \sum_{s=1}^{t-1} \mathbb{I}\{I_s = i\}$ is the number of times arm~$i$ is selected up to step $t-1$. The sequence maximizing $\rho$ simply selects the arm with largest mean for all $n$ steps.
On the other hand, the estimation error is measured as 
\begin{align}\label{eq:mse}
\varepsilon(\mathcal{I}_n) \!=\! \frac{1}{K} \!\sum_{i=1}^K \sqrt{n\E\Big[\big(\wh{\mu}_{i,n} \!-\! \mu_i\big)^2\Big]} \!=\! \frac{1}{K} \!\sum_{i=1}^K \sqrt{\frac{n\sigma_i^2}{T_{i,n}}}\CommaBin
\end{align}
%
where $\wh{\mu}_{i,n}$ is the empirical average of the $T_{i,n}$ samples. 
Similar functions were used by~\citet{carpentier2011upper-confidence-bound,carpentier2015adaptive}.
Notice that \eqref{eq:mse} is multiplying the root mean-square error by $\sqrt{n}$.
This is to allow the user
to specify a direct tradeoff between~\eqref{eq:avg.reward} and~\eqref{eq:mse}
regardless on how their average magnitude
varies as a function of~$n$.%
\footnote{This choice also ``equalizes'' the standard regret bounds for the two separate objectives, so that the minimax regret in terms of $\rho$ and the known upper-bounds on the regret w.r.t.\ $\varepsilon$ are both $\wt{O}(1/\sqrt{n})$.} 
Optimizing $\varepsilon$ requires selecting all the arms with a frequency proportional to their standard deviations. More precisely, each arm should be pulled proportionally to $\sigma_i^{2/3}$.
We define the tradeoff objective function balancing the two functions above as a convex combination,

\begin{align}\label{eq:function}
f_w(\mathcal{I}_n; &\{\nu_i\}_i) = w \rho(\mathcal{I}_n) - (1-w)\varepsilon(\mathcal{I}_n)\nonumber\\
&= w\sum_{i=1}^K \frac{T_{i,n}}{n}\mu_i - \frac{(1-w)}{K} \sum_{i=1}^K \frac{\sigma_i}{\sqrt{T_{i,n}/n}}\CommaBin
\end{align}
%

\begin{figure}[t]
\begin{center}
\includegraphics[trim={0 0.01cm 0 0.1cm},clip, width=1\columnwidth]{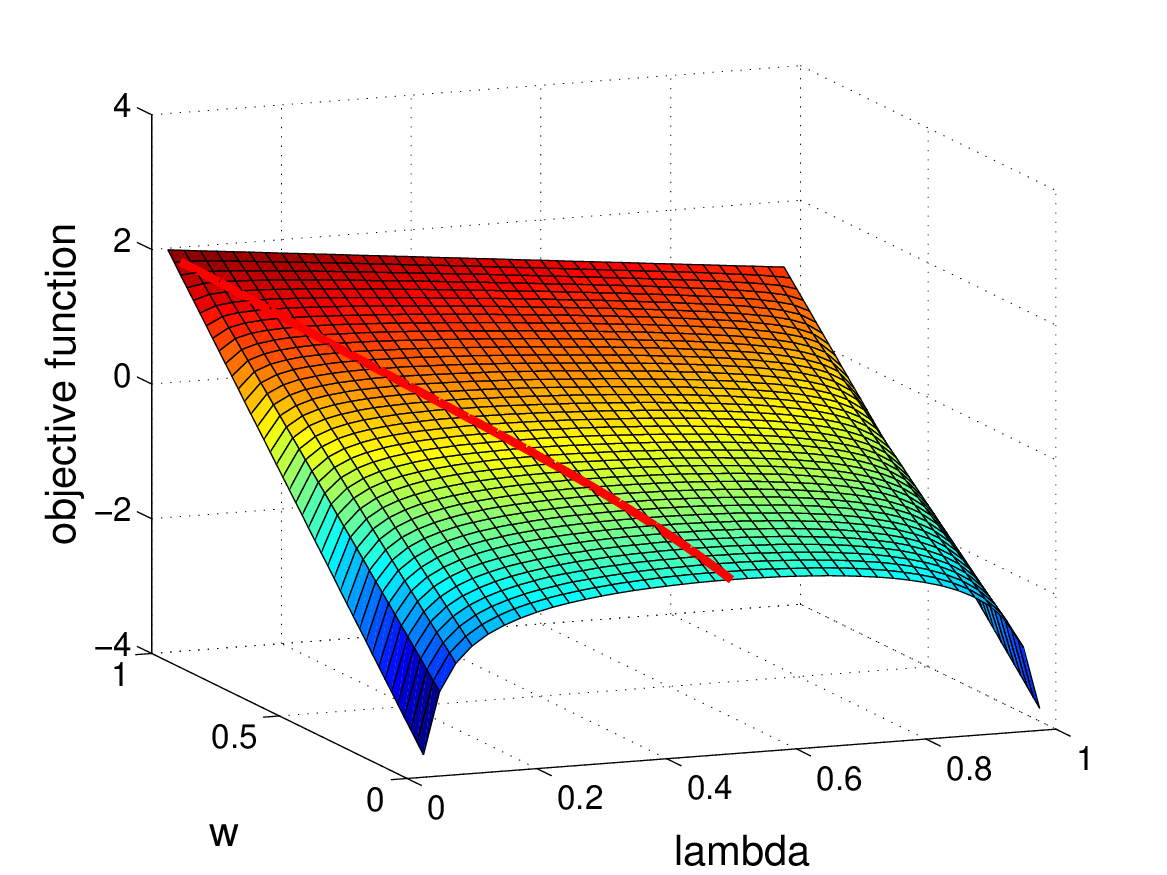}
\caption{\small Function $f_w$ and optimal solution $\blambda^*$ for different values of $w$ (\textit{red} line) for a MAB with $K=2$, $\mu_1=1$, $\mu_2=2$, $\sigma_1^2\!=\!\sigma_2^2\!=\!1$. 
For small $w$, the problem reduces to optimizing the average estimation error. Since the arms have the same variance, $\blambda^*$ is an even allocation over the two arms. As $w$ increases, the $\rho$ component in $f_w$ becomes more relevant and the optimal allocation selects arm $2$ more often, until $w=1$ when all the resources are allocated to arm $2$.}
\label{fig:objective_fun_shape}
\end{center}
\end{figure}

where $w\in[0,1]$ is a weight parameter and the objective is to find the sequence of pulls $\mathcal{I}_n$ which maximizes~$f_w$. For $w=1$, we recover the reward maximization problem, while for $w=0$, the problem reduces to minimizing the average estimation error. In the rest of the paper, we are interested in the case $w\in(0,1)$ since the extreme cases have already been studied. Using \emph{root} mean square error for $\varepsilon(\mathcal{I}_n)$ gives $f_w$ the \emph{scale-invariant} property: Rescaling the distributions equally impacts $\rho$ and $\varepsilon$. 
Furthermore, $f_w$ can be equivalently obtained as a Lagrangian relaxation of a constrained optimization problem where we intend to maximize the reward subject to a desired level of estimation accuracy. In this case, the parameter $w$ is directly related to the value of the Lagrange multiplier. \citet{liu2014trading} proposed a similar tradeoff function 
where the estimation error is measured by Hoeffding confidence intervals, which disregard the variance of the arms and only depend on the number of pulls. 
In addition, in their objective, the optimal allocation radically changes with the horizon $n$, where a short horizon forces the learner to be more explorative, while longer horizons allow the learner to be more greedy in accumulating rewards. 
Overall, their tradeoff reduces to a mixture between a completely uniform allocation (that minimizes the confidence intervals) and a \ucb strategy that maximizes the cumulative reward. While their algorithm 
demonstrated encouraging empirical performance,
no formal analysis was provided.  
In contrast, 
$f_w$ is \textit{stable} over time and it allows us to compare the performance of a learning algorithm to a static optimal allocation. We later show that $f_w$ also enjoys properties such as smoothness and strong concavity that are particularly convenient for the analysis. Besides the mathematical advantages, we notice that without normalizing $\varepsilon$~by~$n$, as $w$ tends to 0, we would never be able to recover the optimal strategy for error minimization, since $\rho(\mathcal{I}_n)$ would always dominate $f_w$, thus making the impact of tuning $w$ difficult to interpret.

Given a horizon $n$, finding the optimal $\mathcal{I}_n$ requires solving a difficult discrete optimization problem, thus we study its continuous relaxation,\footnote{A more accurate definition of $f_w$ over the simplex requires completing it with $f_w(\blambda) = -\infty$ whenever there exists a component $\lambda_i=0$ linked to a non-zero variance $\sigma_i^2$.}

\vspace{-0.2in}
\begin{align}\label{eq:function.relax}
f_w(\blambda; &\{\nu_i\}_i) = w\sum_{i=1}^K \lambda_i\mu_i - \frac{(1-w)}{K} \sum_{i=1}^K \frac{\sigma_i}{\sqrt{\lambda_i}}\CommaBin
\end{align}
\vspace{-0.2in}

where $\blambda \in \D_{K}$ belongs to the $K$-dimensional simplex such that $\lambda_i \geq 0$ and $\sum_i \lambda_i = 1$. As a result, $\blambda$ defines an allocation of arms and $f_w(\blambda; \{\nu_i\}_i)$ is its asymptotic performance if arms are repeatedly chosen according to $\blambda$. We define the optimal allocation and its performance as $\blambda^* \!=\! \arg\max_{\blambda\in\D_{K}}f_w(\blambda; \{\nu_i\}_i)$ and $f^*\! =\! f_w(\blambda^*; \{\nu_i\}_i)$ respectively.
Since $f_w$ is concave and $\D_K$ is convex, $\blambda^*$ always exists and it is unique whenever $w<1$ (and there is at least a non-zero variance)
or when the largest mean is distinct from the second largest mean. 
Although a closed-form solution cannot be computed in general, intuitively $\blambda^*$ favors arms with large means and large variance since allocating a large portion of the resources to them contribute to minimizing $f_w$ by increasing the reward $\rho$ and reducing the error $\varepsilon$. The parameter~$w$ defines the sensitivity of $\blambda^*$ to the arm parameters, such that for large $w$, $\blambda^*$ tends to concentrate on the arm with largest mean, while for small $w$, $\blambda^*$ allocates arms proportionally to their standard deviations. Fig.~\ref{fig:objective_fun_shape}, Sect.~\ref{ss:synthetic}, and App.~\ref{app:experiments} provide additional examples illustrating the sensitivity of $\blambda^*$ to the parameters in $f_w$.
Let $\mathcal{I}_n^*$ be the optimal discrete solution to Eq.~\ref{eq:function}. Then, we show that the difference between the two solutions rapidly shrinks to $0$ with $n$. In fact,\footnote{Consider a real number $r\in[0,1]$ and $R_n$ any rounding of $rn$ (e.g., $R_n = \lfloor rn\rfloor$), then $|R_n - rn| \leq 1$. If we use $\wh{r}_n = R_n/n$ as fractional approximation of $r$ with resolution $n$, then we obtain that $|\wh{r}_n - r| \leq 1/n$.} for any arm $i$, $|T^*_{i,n}/n - \lambda^*_i|\leq 1/n$ and according to Lem.~\ref{lem:error.allocation.opt} (stated later), this guarantees that the value of $\lambda^*$ ($f^*$) differs from the optimum of $f_w(\mathcal{I}_n)$ by $1/n^2$.
%

In the following, we consider the restricted simplex $\wb{\D}_K = \{\lambda_i \geq \lambda_{\min}, \sum_i \lambda_i = 1\}$ with $\lambda_{\min}\!>\!0$ on which $f_w$ is always bounded and it can be characterized by the following lemma.

%

\begin{lemma}\label{lem:strong.concave}
Let $\sigma_{\max} = \max_i \sigma_i$ and $\sigma_{\min} = \min_i \sigma_i >~0$ be the largest and smallest standard deviations, the function $f_w(\blambda; \{\nu_i\})$ is $\alpha$-strongly concave everywhere in $\D_K$ with $\alpha = \frac{3(1-w)\sigma_{\min}}{4K}$ and it is $\beta$-smooth in $\wb{\D}_K$ with $\beta = \frac{3(1-w)\sigma_{\max}}{4K\lambda_{\min}^{5/2}}\cdot$
%
%
\end{lemma}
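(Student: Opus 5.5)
The plan is to compute the Hessian of $f_w(\blambda;\{\nu_i\})$ directly. Since $f_w$ is separable in the coordinates, its Hessian is diagonal, and its eigenvalues are just the second derivatives of the one-dimensional pieces. The reward term $w\sum_i\lambda_i\mu_i$ is linear, so it contributes nothing to the Hessian. All the curvature comes from the error term $-\frac{1-w}{K}\sum_i \sigma_i\lambda_i^{-1/2}$. Differentiating twice gives
\begin{align*}
\frac{\partial^2 f_w}{\partial\lambda_i^2} = -\frac{(1-w)}{K}\cdot\frac{3}{4}\,\sigma_i\,\lambda_i^{-5/2}, \qquad \frac{\partial^2 f_w}{\partial\lambda_i\partial\lambda_j}=0 \ (i\neq j).
\end{align*}
So $\nabla^2 f_w(\blambda) = -\frac{3(1-w)}{4K}\,\mathrm{diag}(\sigma_i\lambda_i^{-5/2})$, which is negative definite whenever every $\sigma_i>0$.

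For strong concavity we need $-\nabla^2 f_w(\blambda)\succeq \alpha I$ on $\D_K$, i.e.\ a lower bound on each $\sigma_i\lambda_i^{-5/2}$. On the simplex, $\lambda_i\le 1$, so $\lambda_i^{-5/2}\ge 1$, and also $\sigma_i\ge\sigma_{\min}$. Hence every diagonal entry of $-\nabla^2 f_w$ is at least $\frac{3(1-w)\sigma_{\min}}{4K}=\alpha$. The boundary points where some $\lambda_i=0$ need separate care. There $f_w=-\infty$ by the footnote convention, so concavity holds trivially along any segment touching them. For the interior I will use the standard second-order characterization along segments: $g(t)=f_w(\blambda+t(\blambda'-\blambda))$ satisfies $g''(t)\le -\alpha\|\blambda'-\blambda\|^2$. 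Integrating this gives $f_w(\blambda')\le f_w(\blambda)+\nabla f_w(\blambda)^\top(\blambda'-\blambda)-\frac{\alpha}{2}\|\blambda'-\blambda\|^2$.

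For smoothness on $\wb{\D}_K$ the bound goes the other way. Since $\lambda_i\ge\lambda_{\min}$, we have $\lambda_i^{-5/2}\le\lambda_{\min}^{-5/2}$, and since $\sigma_i\le\sigma_{\max}$, every eigenvalue of $-\nabla^2 f_w$ is at most $\frac{3(1-w)\sigma_{\max}}{4K\lambda_{\min}^{5/2}}=\beta$. The set $\wb{\D}_K$ is convex, so segments stay inside it. The mean value theorem applied to $\nabla f_w$ along a segment then gives $\|\nabla f_w(\blambda)-\nabla f_w(\blambda')\|\le\beta\|\blambda-\blambda'\|$, which is the claimed $\beta$-smoothness.

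No step is a real obstacle; this is a direct calculation. The only point needing care is the interpretation "everywhere in $\D_K$" for strong concavity, given that $f_w$ is undefined or $-\infty$ on faces of the simplex. I will handle this by proving the inequality on the relative interior and extending it by the $-\infty$ convention, or alternatively by continuity of the extended function.
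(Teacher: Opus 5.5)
Your proposal is correct and follows the paper's proof. Both compute the diagonal Hessian $\frac{3(1-w)}{4K}\mathrm{diag}(\sigma_i\lambda_i^{-5/2})$ of $-f_w$, bound its entries below using $\lambda_i\le 1$ and $\sigma_i\ge\sigma_{\min}$, and bound them above using $\lambda_i\ge\lambda_{\min}$ and $\sigma_i\le\sigma_{\max}$. Your extra care (the $-\infty$ convention on the boundary of $\D_K$, and the passage from the Hessian bound to a Lipschitz gradient along segments of the convex set $\wb{\D}_K$) fills in details the paper leaves implicit, since it only asserts $\alpha I\preceq H_g\preceq\beta I$ for $\blambda\in\wb{\D}_K$.
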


Finally, we define the performance of a learning algorithm. 
Let $\wt{\blambda}_n$ be the empirical frequency of pulls, i.e., $\wt{\lambda}_{i,n} = T_{i,n}/n$. We define its regret w.r.t.\ the value of the optimal allocation as $R_n(\wt{\blambda}_n) = f^* - f_w(\wt{\blambda}_n; \{\nu_i\}_i)$.
The previous equation defines the pseudo-regret of a strategy $\wt{\blambda}_n$, since in Eq.~\ref{eq:mse} the second equality is true for fixed allocations. This is similar to the definition of~\citet{carpentier2015adaptive}, where the difference between \textit{true} and pseudo-regret is discussed  in detail.


\vspace{-0.1in}
\section{The \forcing Algorithm}\label{s:algorithms}
\vspace{-0.1in}

\textbf{Why na\"{i}ve \ucb fails.}
One of the most successful approaches to bandits is the optimism-in-face-of-uncertainty, where we construct confidence bounds for the parameters and select the arm maximizing an upper-bound on the objective function. This approach was successfully applied in both regret minimization (see e.g.,~\cite{auer2002finite-time}) and active exploration (see e.g.,~\cite{carpentier2011upper-confidence-bound}). As such, a first natural approach to our problem is to construct an upper-bound on $f_w$ as (see Prop.~\ref{prop:confidence.intervals} for the definition of the confidence bounds)
\begin{align}\label{eq:naive.ucb}
f_w^{UB}(\blambda; \{\wh{\nu}_{i,n}\}&) = w \sum_{i=1}^K \lambda_i \bigg(\wh{\mu}_{i,n} + \sqrt{\frac{\log(1/\delta_n)}{2T_{i,n}}}\bigg) \\
&\!\!\!-\! (1\!-\!w) \sum_{i=1}^K \frac{1}{\sqrt{\lambda_i}} \bigg(\wh{\sigma}_{i,n} \!-\! \sqrt{\frac{2\log(2/\delta_n)}{T_{i,n}}}\bigg)\cdot\nonumber
\end{align}
At each step $n$, we compute the allocation $\wh{\blambda}_{i,n}^{UB}$ maximizing $f_w^{UB}$ and select arms accordingly (e.g., by pulling an arm at random from $\wh{\blambda}_{i,n}^{UB}$). Although the confidence bounds guarantee that for any $\blambda$, $f_w^{UB}(\blambda; \{\wh{\nu}_{i,n}\}) \geq f_w(\blambda; \{\nu_i\})$ w.h.p.,  this approach is intrinsically flawed and it would perform poorly. While for large values of $w$, the algorithm reduces to \ucb, for small values of $w$, the algorithm tends to allocate arms to balance the estimation errors on the basis of \textit{lower-bounds} on the variances and thus arms with small lower-bounds are selected less. Since small lower-bounds may be associated with arms with large confidence intervals, and thus poorly estimated variances, this behavior would prevent the algorithm from correcting its estimates and improving its performance over time (see App.~\ref{app:experiments} for additional discussion and empirical simulations).
Constructing lower-bounds on~$f_w$  suffers from the same issue. This suggests that a \emph{straightforward} (na\"{i}ve) application of a \ucb-like strategy fails in this context. As a result, we take a different approach and  propose a \textit{forcing} algorithm inspired by the \gafs algorithm introduced by~\citet{antos2010active} for active exploration.\footnote{Variations on the \textit{forcing} or \textit{forced sampling} approach have been used in many settings including standard bandits~\citep{yakowitz1995the-nonparametric,szepesvari2008learning}, linear bandits~\citep{goldenshluger2013a-linear}, contextual bandits~\citep{langford2007the-epoch-greedy}, and experimental optimal design~\citep{wiens2014v-optimal}.}

\begin{figure}[t]
\begin{center}
\bookboxxx{
\begin{small}
    \begin{algorithmic}[1]
        \renewcommand{\algorithmicrequire}{\textbf{Input:}}
        \renewcommand{\algorithmicensure}{\textbf{Output:}}
\STATE \textbf{Input:} forcing parameter $\eta$, weight $w$
\FOR{$t=1,\ldots,n$}
\STATE $U_t = \arg\min T_{i,t}$
\IF{$T_{U_t,t} < \eta\sqrt{t}$}
\STATE Select arm $I_t = U_t$ (\textbf{forcing})
\ELSE
\STATE Compute optimal estimated allocation
\vspace{-0.1in}
$$ \wh{\blambda}_t = \arg\max_{\blambda\in\wb{\D}_K} f_w(\blambda; \{\wh{\nu}_{i,t}\}_i) $$
\vspace{-0.1in}
\STATE Select arm (\textbf{tracking})
\vspace{-0.1in}
$$ I_t = \arg\max_{i=1,\ldots,K} \wh\lambda_{i,t} - \wt\lambda_{i,t} $$
\vspace{-0.15in}
\ENDIF
\STATE Pull arm $I_t$, observe $X_{I_t,t}$, update $\wh\nu_{I_t}$.
\ENDFOR
    \end{algorithmic}
    \vspace{-0.05in}
    \caption{The \forcing algorithm.}
    \label{alg:forcing}
\end{small}
}
\vspace{-0.15in}
\end{center}
\end{figure}

%
%

\textbf{Forced sampling.}
The \forcing algorithm is illustrated in Fig.~\ref{alg:forcing}. It receives as input an exploration parameter $\eta>0$ and the restricted simplex $\wb{\D}_K$ defined by $\lambda_{\min}$. At each step $t$, the algorithm first checks the number of pulls of each arm and selects any arm with less than $\eta\sqrt{t}$ samples. If all arms have been sufficiently pulled, the allocation~$\wh{\blambda}_t$ is computed using the empirical estimates of the arms' means and variances $\wh{\mu}_{i,t} = \frac{1}{T_{i,t}} \sum_{s=1}^{T_{i,t}} X_{i,s}$ and $\wh{\sigma}^2_{i,n} = \frac{1}{2T_{i,t}(T_{i,t}-1)} \sum_{s,s'=1}^{T_{i,t}} \big(X_{i,s} - X_{i,s'}\big)^2$.
Notice that the optimization is done over the restricted simplex $\wb{\D}_K$ and $\widehat\blambda_t$ can be computed efficiently. 
Once the allocation $\wh{\blambda}_t$ is computed, an arm is selected. A straightforward option is either to directly implement the optimal estimated allocation by pulling an arm drawn at random from it or allocate the arms proportionally to $\wh{\blambda}_t$ over a short phase.
Both solutions may not be effective since the final performance is evaluated according to the \textit{actual} allocation realized over all 
$n$ steps (i.e., $\wt{\lambda}_{i,n} = T_{i,n}/n$) and 
not $\wh{\blambda}_n$. Consequently, even when $\wh{\blambda}_n$ is an accurate approximation of $\blambda^*$, the regret may not be small.\footnote{Consider the case of 3 arms, where 
after $t$ steps, the empirical allocation $\wt{\blambda}_t$ is $(0.5,0.1,0.4)$ and 
the estimated allocation $\wh{\blambda}_t$ is $(0.5,0.4,0.1)$. 
In the following steps, the most effective way to reduce the regret is not to use $\wh{\blambda}_t$, 
but to pull arm 2 more than $40\%$,
in order to close the gap between $\wt{\blambda}_t$ and $\wh{\blambda}_t$ as fast as possible.}  \forcing explicitly tracks the allocation $\wh{\blambda}_n$ by selecting the arm $I_t$ that is under-pulled the most so far.
This tracking step allows us to force $\wt{\blambda}_{n}$ to stay close to $\wh{\blambda}_n$ (and its performance) at each step. 
The tracking step is slightly different from \gafs, which selects the arms with largest ratio between $\wh{\lambda}_{i,n}$ and $\wt{\lambda}_{i,t}$. We show in the analysis that the proposed tracking rule is more efficient.

The parameter $\eta$ defines the amount of exploration forced by the algorithm. A large $\eta$ forces all arms to be pulled many times. While this guarantees accurate estimates $\wh{\mu}_{i,t}$ and $\wh{\sigma}^2_{i,n}$ and an optimal estimated allocation $\wh{\blambda}_t$ that rapidly converges to $\blambda^*$, the algorithm would perform the tracking step very rarely and thus~$\wt{\blambda}_t$ would not track $\wh{\blambda}_t$ fast enough. 
In the next section, we show that any value of $\eta$ in a wide range (e.g., $\eta=1$) guarantees a small regret. The other parameter is $\lambda_{\min}$ which defines a restriction on the set of allocations that can be learned. From an algorithmic point of view, $\lambda_{\min} = 0$ is a viable choice since $f_w$ is strongly concave and it always admits at least one solution in $\D_K$ (the full simplex). 
Nonetheless, we show next that $\lambda_{\min}$ needs to be strictly positive to guarantee uniform convergence of $f_w$ for true and estimated parameters, which is a critical property to ensure regret bounds.


\vspace{-0.1in}
\section{Theoretical Guarantees}\label{s:theory}
\vspace{-0.1in}

In this section, we derive an upper-bound on the regret of \forcing with explicit dependence on its parameters and the characteristics of $f_w$. 
%
We start with high-probability confidence intervals for the mean and the standard deviation (see Thm.\@ 10 of~\cite{maurer2009empirical}).

\begin{proposition}\label{prop:confidence.intervals}
Fix $\delta\in(0,1)$. For any $n>0$ and any arm $i\in[K]$, $\big| \wh{\mu}_{i,n} \!-\! \mu_i \big| \leq \sqrt{\frac{\log(1/\delta_n)}{2T_{i,n}}},\;\; \big| \wh{\sigma}_{i,n} \!-\! \sigma_i \big| \leq \sqrt{\frac{2\log(2/\delta_n)}{T_{i,n}}},$
%
%
w.p.~$1-\delta$, where $\delta_n = \delta/(4Kn(n+1))$.
\end{proposition}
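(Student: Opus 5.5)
The plan is to prove the statement for a fixed arm $i$ and a fixed number of samples $s=T_{i,n}$, and then take a union bound over arms and over the possible values of $s$ and $n$. This is how the factor $4Kn(n+1)$ in $\delta_n$ arises. The reason for this route is that $T_{i,n}$ is random and depends on past observations through the algorithm's choices, so concentration cannot be applied to it directly. However, $\wh{\mu}_{i,n}$ and $\wh{\sigma}_{i,n}$ are functions of the first $T_{i,n}$ samples of the i.i.d.\ sequence $(X_{i,s})_{s\ge1}$. Fixing $s$ therefore reduces both statements to bounds for an i.i.d.\ sample of size $s$ from a distribution supported in $[0,1]$.

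For the mean, with $s$ fixed, Hoeffding's inequality gives $|\frac1s\sum_{u\le s}X_{i,u}-\mu_i|\le\sqrt{\log(2/\delta')/(2s)}$ with probability $1-\delta'$. A one-sided version with $\log(1/\delta')$ holds with probability $1-\delta'$ on each side. So I will allocate $2\delta_n$ to the mean, i.e.\ $\delta_n$ to each side, which matches the constant $\log(1/\delta_n)$ in the statement.

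For the standard deviation, I will invoke Theorem~10 of \citet{maurer2009empirical}. For i.i.d.\ samples in $[0,1]$ with $s\ge2$, it gives that the sample standard deviation (the U-statistic form used by \forcing) satisfies $|\wh{\sigma}_{i}-\sigma_i|\le\sqrt{2\log(1/\delta')/(s-1)}$ on each side with probability $1-\delta'$. Some care is needed to get $T_{i,n}$ rather than $T_{i,n}-1$ in the denominator. I would either absorb the difference by using $\log(2/\delta_n)$ instead of $\log(1/\delta_n)$, or note that the constant can be adjusted for $s\ge2$. The case $s=1$, where the bound is vacuous because $\sqrt{2\log(2/\delta_n)}\ge1\ge|\wh{\sigma}-\sigma|$, is handled trivially. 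Allocating another $2\delta_n$ to the two-sided variance deviation, the failure probability for fixed $(i,s,n)$ is at most $4\delta_n$.

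Finally, I take a union bound over $i\in[K]$, over $s\in\{1,\dots,n\}$ (the possible values of $T_{i,n}$), and over $n\ge1$. This gives a total failure probability of
\[
\sum_{n\ge1}\sum_{s=1}^{n} K\cdot 4\delta_n=\sum_{n\ge1}\frac{4Kn\,\delta}{4Kn(n+1)}=\delta\sum_{n\ge1}\frac{1}{n(n+1)}=\delta .
\]
The step I expect to be the main obstacle is matching the exact constants from Maurer--Pontil, in particular the $T_{i,n}$ versus $T_{i,n}-1$ issue and the factor inside the logarithm. If that does not close cleanly, I would slightly loosen the constant, for example to $\sqrt{2\log(2/\delta_n)/(T_{i,n}-1)}$, or restrict to $T_{i,n}\ge2$, which the forcing step guarantees after a few rounds. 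Everything else is routine.
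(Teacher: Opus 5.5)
Your tools are the right ones (the paper itself only points to Hoeffding's inequality and Thm.~10 of Maurer and Pontil), and the reduction to a fixed sample count $s$ via the i.i.d.\ sequence $(X_{i,u})_{u\ge1}$ is correct. The gap is the union bound.

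\textbf{The union bound does not close.} A union over all pairs $(n,s)$ with $s\le n$, and over arms, costs $\sum_{s=1}^{n}4K\delta_n=4Kn\delta_n=\delta/(n+1)$ at each $n$. Summing, $\sum_{n\ge1}\delta/(n+1)=\infty$. The equality $\frac{4Kn\delta}{4Kn(n+1)}=\frac{\delta}{n(n+1)}$ in your display is false. The factor $n(n+1)$ in $\delta_n$ pays for a union over one integer index, not two.

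The fix is to index the good events by the sample count alone. For each arm $i$ and each $s\ge1$, require the two-sided Hoeffding and Maurer--Pontil bounds for $X_{i,1},\dots,X_{i,s}$ at level $\delta_s$ (that is, $\delta_n$ evaluated at $n=s$), at cost $4\delta_s$. The total failure probability is then $\sum_{s\ge1}4K\delta_s=\delta\sum_{s\ge1}\frac{1}{s(s+1)}=\delta$. On this event, take any $n$ and $i$ with $s=T_{i,n}\ge1$. Then $s\le n-1$, so $\delta_n\le\delta_s$ and $\log(1/\delta_s)\le\log(1/\delta_n)$. The level-$\delta_s$ bounds therefore imply the level-$\delta_n$ bounds at every $n$ simultaneously, which is the uniformity the later analysis uses. (For a single fixed $n$, your union over $i$ and $s\le n$ alone would suffice, costing $\delta/(n+1)$.)

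\textbf{The step from $T_{i,n}-1$ to $T_{i,n}$.} This needs more than you sketch. The inequality $\frac{\log(1/\delta_s)}{s-1}\le\frac{\log(2/\delta_s)}{s}$ is equivalent to $(s-1)\log 2\ge\log(1/\delta_s)$, i.e.\ $s\ge\log_2(2/\delta_s)$. So switching to $\log(2/\delta_n)$ absorbs the $-1$ only above this threshold, not for all $s\ge2$. The threshold forces $s>4$, so Thm.~10 applies there.

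Below the threshold, use the vacuity argument you gave for $s=1$, but on the whole range. For $s\ge2$, $\wh{\sigma}^2=\frac{1}{s-1}\sum_{u\le s}(X_{i,u}-\wh{\mu})^2\le\frac{s}{4(s-1)}\le\frac12$. Also $\sigma\le\frac12$, so $|\wh{\sigma}-\sigma|\le 1/\sqrt{2}$. Meanwhile $s<\log_2(2/\delta_s)$ gives $\sqrt{2\log(2/\delta_n)/s}\ge\sqrt{2\log(2/\delta_s)/s}>\sqrt{2\log 2}>1$.

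With this case split the statement holds exactly as written. Your fallbacks would not prove it. Keeping $T_{i,n}-1$ or restricting to $T_{i,n}\ge2$ changes the statement. Using $s/(s-1)\le 2$ only yields $\sqrt{4\log(1/\delta_n)/s}$, which is not dominated by $\sqrt{2\log(2/\delta_n)/s}$.
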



The accuracy of the estimates translates into the difference in estimated and true function $f$ (we drop the dependence on $w$ for readability).

\begin{lemma}\label{lem:error.distro}
Let $\wh{\nu}_{i}$ be an empirical distribution characterized by mean $\wh{\mu}_{i}$ and variance $\wh{\sigma}_{i}$ such that $|\wh{\mu}_{i} - \mu_i| \leq \epsilon^\mu_{i}$ and $|\wh{\sigma}_{i} - \sigma_i| \leq \epsilon^\sigma_{i}$, then for any fixed $\blambda\in \D_K$ we have $\big|f(\blambda; \{\nu_i\}) \!-\! f(\blambda; \{\wh{\nu}_{i}\})\big| \leq w \max_i \epsilon^\mu_{i} + \frac{1-w}{\min_i \sqrt{\lambda_{i}}} \max_i \epsilon^\sigma_{i}$.
%
%
\end{lemma}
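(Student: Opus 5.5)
The bound follows by writing the difference of the two objectives explicitly and splitting it into a reward part and an error part. From the relaxed objective \eqref{eq:function.relax},
\begin{align*}
f(\blambda; \{\nu_i\}) - f(\blambda; \{\wh{\nu}_{i}\}) = w\sum_{i=1}^K \lambda_i(\mu_i - \wh{\mu}_i) - \frac{1-w}{K}\sum_{i=1}^K \frac{\sigma_i - \wh{\sigma}_i}{\sqrt{\lambda_i}}\CommaBin
\end{align*}
and the triangle inequality bounds the two sums separately. (If some $\lambda_i=0$, the right-hand side of the claim is $+\infty$ and there is nothing to prove, so we may assume $\min_i\lambda_i>0$.)

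\emph{Reward term.} Since $\blambda\in\D_K$, the weights satisfy $\lambda_i\ge 0$ and $\sum_i\lambda_i=1$, so the first sum is a convex combination of the deviations. Hence
\[
w\Big|\sum_i \lambda_i(\mu_i-\wh{\mu}_i)\Big| \le w\sum_i\lambda_i\epsilon^\mu_i \le w\max_i\epsilon^\mu_i .
\]

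\emph{Error term.} We bound each summand by $\epsilon^\sigma_i/\sqrt{\lambda_i} \le \max_j\epsilon^\sigma_j/\min_j\sqrt{\lambda_j}$. The $1/K$ prefactor then averages the $K$ identical bounds, giving
\[
\frac{1-w}{K}\sum_i\frac{|\sigma_i-\wh{\sigma}_i|}{\sqrt{\lambda_i}} \le \frac{(1-w)\max_i\epsilon^\sigma_i}{\min_i\sqrt{\lambda_i}} .
\]

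Adding the two bounds gives the claim. There is no real obstacle: the only points requiring care are keeping the $1/K$ normalization from \eqref{eq:function.relax} so that the sum averages rather than accumulates, and the degenerate case $\lambda_i=0$ handled above.
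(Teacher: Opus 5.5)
Your proof is correct and is essentially the paper's own argument: expand the difference, apply the triangle inequality, bound the reward part by $w\max_i\epsilon^\mu_i$ using $\sum_i\lambda_i=1$, and bound each of the $K$ error summands by $\max_i\epsilon^\sigma_i/\min_i\sqrt{\lambda_i}$ so that the $1/K$ prefactor cancels the sum. Your aside on $\min_i\lambda_i=0$ is not in the paper and is slightly loose, since there $f$ itself is $-\infty$ and the right-hand side can be $0/0$; it is cleaner to say the lemma is only meaningful for $\min_i\lambda_i>0$, which is how it is used on $\wb{\D}_K$.
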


This lemma shows that the accuracy in estimating $f$ is affected by the largest error in estimating the mean or the variance of any arm. This is due to the fact that $\blambda$ may give a high weight to a poorly estimated arm, i.e., $\lambda_i$ may be large for large $\epsilon_i$. As a result, if $\epsilon^\mu_{i}$ and $\epsilon^\sigma_{i}$ are defined as in Prop.~\ref{prop:confidence.intervals}, the lemma requires that all arms are pulled often enough to guarantee an accurate estimation of~$f$. Furthermore, the upper-bound scales inversely with the minimum proportion $\min_i \lambda_i$. This shows the need of restricting the possible $\blambda$s to allocations with a non-zero lower-bound to $\min_i \lambda_i$, which is guaranteed by the use of the restricted simplex $\wb{\D}_K$ in the algorithm. Finally, notice that here we consider a fixed allocation $\blambda$, while later we need to deal with a (possibly) random choice of $\blambda$, which requires a union bound over a cover of $\wb{\D}_K$ (see Cor.~\ref{cor:error.distro.ext}).
%
%
Next two lemmas show how the difference in performance translates in the difference of allocations and vice versa.

\begin{lemma}\label{lem:error.inversion}
If an allocation $\blambda\in\D_K$ is such that $\big|f^* - f(\blambda; \{\nu_i\})\big| \leq \epsilon^f$, then for any arm $i\in[K]$, $|\lambda_i - \lambda^*_i| \leq \sqrt{\frac{2K}{\alpha}} \sqrt{\epsilon^f}$,
where $\alpha$ is the strong-concavity parameter of $f_w$ (Lem.~\ref{lem:strong.concave}).
\end{lemma}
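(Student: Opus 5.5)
The plan is to combine strong concavity of $f_w$ (Lem.~\ref{lem:strong.concave}) with the first-order optimality of $\blambda^*$ over the convex set $\D_K$. Because $f$ is $\alpha$-strongly concave on $\D_K$, for any $\blambda\in\D_K$ we have
\begin{align*}
f(\blambda) \leq f(\blambda^*) + \nabla f(\blambda^*)^\top(\blambda-\blambda^*) - \frac{\alpha}{2}\|\blambda-\blambda^*\|^2 .
\end{align*}
We also need the variational inequality $\nabla f(\blambda^*)^\top(\blambda-\blambda^*)\leq 0$, which holds because $\blambda^*$ maximizes a concave function over the convex set $\D_K$.

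One technicality has to be settled first: $\nabla f$ blows up at the boundary of the simplex. Since $\sigma_{\min}>0$, the term $-\sigma_i/\sqrt{\lambda_i}$ forces $\blambda^*$ to lie in the relative interior, so $\nabla f(\blambda^*)$ is well defined and the optimality condition there reads $\nabla f(\blambda^*)^\top(\blambda-\blambda^*)\le 0$ for all $\blambda\in\D_K$. In fact this holds with equality in the interior, since $\nabla f(\blambda^*)$ is proportional to $\mathbf{1}$ and $\blambda-\blambda^*$ sums to zero. If $\blambda$ itself has a zero coordinate, then $f(\blambda)=-\infty$ under the paper's convention, so the hypothesis $|f^*-f(\blambda)|\le\epsilon^f$ forces $\blambda$ into the interior as well. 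To avoid gradient issues at $\blambda$ altogether, I would write the strong-concavity inequality in its gradient-free form along the segment between $\blambda^*$ and $\blambda$.

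Combining the two inequalities gives
\begin{align*}
\frac{\alpha}{2}\|\blambda-\blambda^*\|_2^2 \leq f^*-f(\blambda) \leq \epsilon^f ,
\end{align*}
so $\|\blambda-\blambda^*\|_2\le\sqrt{2\epsilon^f/\alpha}$. Since each coordinate satisfies $|\lambda_i-\lambda_i^*|\le\|\blambda-\blambda^*\|_2$, this already yields the claim, and it is stronger than the stated bound by the factor $\sqrt{K}$. The $\sqrt{K}$ in the statement presumably comes from a looser norm conversion in the authors' derivation or from the normalization of $\alpha$, for example if strong concavity were stated with respect to a scaled norm. Either way the claimed inequality follows, because $\sqrt{2/\alpha}\le\sqrt{2K/\alpha}$.

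The main obstacle is the boundary and differentiability issue. It is handled by the interiority argument above, which uses $\sigma_{\min}>0$, or alternatively by applying the midpoint form of strong concavity:
\begin{align*}
f\Big(\tfrac{\blambda+\blambda^*}{2}\Big)\ge \tfrac12 f(\blambda)+\tfrac12 f^*+\tfrac{\alpha}{8}\|\blambda-\blambda^*\|^2 .
\end{align*}
Combining this with $f\big(\tfrac{\blambda+\blambda^*}{2}\big)\le f^*$ gives $\|\blambda-\blambda^*\|^2\le\frac{4}{\alpha}(f^*-f(\blambda))$. That version requires no gradients and still yields the bound, since $2/\sqrt{\alpha}\le\sqrt{2K/\alpha}$ for $K\ge2$.
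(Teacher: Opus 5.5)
Your proposal is correct and follows the paper's proof essentially step for step: expand using strong concavity around $\blambda^*$, drop the linear term by first-order optimality of $\blambda^*$, and then use $\|\cdot\|_\infty\le\|\cdot\|_2$. The paper's proof likewise ends with the sharper constant $\sqrt{2/\alpha}$, so the $\sqrt{K}$ in the statement is just slack. Your handling of boundary points, through interiority of $\blambda^*$ from $\sigma_{\min}>0$ and the $-\infty$ convention for $\blambda$, tightens an argument the paper leaves loose. The paper's proof quietly takes $\blambda\in\wb{\D}_K$ and relies on Asm.~\ref{asm:simplex} instead, even though the statement allows any $\blambda\in\D_K$.
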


\begin{lemma}\label{lem:error.allocation.opt}
The performance of an allocation $\blambda\in\wb{\D}_K$ compared to the optimal allocation $\blambda^*$ is such that $f(\blambda^*; \{\nu_i\}) - f(\blambda; \{\nu_i\}) \leq \frac{3\beta}{2}\|\blambda - \blambda^*\|^2$.
\end{lemma}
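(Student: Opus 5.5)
The plan is a second-order Taylor expansion of $f$ around $\blambda^*$, using the $\beta$-smoothness of Lem.~\ref{lem:strong.concave} on $\wb{\D}_K$ to control the curvature and the first-order optimality of $\blambda^*$ to get rid of the linear term. Write $f(\blambda) = f(\blambda;\{\nu_i\})$. Both $\blambda$ and $\blambda^*$ lie in the convex set $\wb{\D}_K$ (I assume $\lambda_{\min}$ is small enough that $\blambda^*\in\wb{\D}_K$, as the algorithm's restriction implicitly requires). Hence the whole segment $\blambda^* + s(\blambda-\blambda^*)$, $s\in[0,1]$, stays in $\wb{\D}_K$, where $f$ is twice differentiable.

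By Taylor's theorem with integral remainder,
\begin{align*}
f(\blambda) = f(\blambda^*) + \nabla f(\blambda^*)^\top(\blambda-\blambda^*) + \int_0^1 (1-s)\,(\blambda-\blambda^*)^\top \nabla^2 f\big(\blambda^*+s(\blambda-\blambda^*)\big)(\blambda-\blambda^*)\,ds .
\end{align*}
The Hessian is diagonal, with entries $-\frac{3(1-w)\sigma_i}{4K}\lambda_i^{-5/2}$. On $\wb{\D}_K$ its operator norm is therefore at most $\beta$ by Lem.~\ref{lem:strong.concave}. The remainder is then bounded in absolute value by $\frac{\beta}{2}\|\blambda-\blambda^*\|^2$.

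For the linear term, $\blambda^*$ maximizes the concave $f$ over the simplex, so the KKT conditions apply. Arms with $\lambda^*_i>0$ share a common partial derivative $\partial_i f(\blambda^*) = c$. Since $\sum_i(\lambda_i-\lambda_i^*)=0$, the linear term vanishes whenever $\blambda^*$ is interior (all $\lambda^*_i>0$). Interiority holds whenever some $\sigma_i>0$ term forces $\lambda_i^*>0$, and under the assumption $\sigma_{\min}>0$ of Lem.~\ref{lem:strong.concave} it holds for every arm. If instead $\blambda^*$ is only optimal over $\wb{\D}_K$ and touches the boundary $\lambda_i=\lambda_{\min}$, first-order optimality still gives $\nabla f(\blambda^*)^\top(\blambda-\blambda^*)\le 0$, and this sign is all the upper bound needs.

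Combining the two pieces gives $f(\blambda^*)-f(\blambda) \le \frac{\beta}{2}\|\blambda-\blambda^*\|^2 \le \frac{3\beta}{2}\|\blambda-\blambda^*\|^2$. The looser constant $3\beta/2$ leaves slack. If one prefers to avoid the exact KKT argument, the slack can absorb a cruder treatment of the gradient term, for example bounding the projection of the gradient onto the simplex's tangent space via smoothness.

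The main obstacle is justifying that the first-order term is nonpositive. This requires $\blambda^*$ to be the optimum over the same restricted domain, or interior, so that the stated Hessian bound applies along the entire segment. I will handle it through the interiority argument above, using $\sigma_{\min}>0$ and the blow-up of $\sigma_i/\sqrt{\lambda_i}$ at $0$.
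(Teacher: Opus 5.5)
Your main argument is correct. It differs from the paper's in how the first-order term is handled.

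\textbf{What is shared.} Both proofs expand $g=-f$ to second order around $\blambda^*$. Both bound the Hessian by $\beta$ along the segment, which lies in $\wb{\D}_K$ by Asm.~\ref{asm:simplex}.

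\textbf{Your route.} You make the linear term vanish exactly. $\blambda^*$ is the maximizer over the full simplex, and $f=-\infty$ on its boundary because $\sigma_{\min}>0$ and $w<1$, so $\blambda^*$ is interior. The KKT conditions then give $\nabla f(\blambda^*)=c\mathbf{1}$, hence $\nabla f(\blambda^*)^\top(\blambda-\blambda^*)=0$. This yields the sharper constant $\beta/2$.

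\textbf{The paper's route.} The paper keeps the linear term and subtracts $\nabla g(\blambda)^\top(\blambda-\blambda^*)$, claiming convexity gives it the helpful sign. It then bounds $(\nabla g(\blambda^*)-\nabla g(\blambda))^\top(\blambda-\blambda^*)\le\beta\|\blambda-\blambda^*\|^2$ by Lipschitz gradients, which is where $3\beta/2$ comes from. However, convexity gives $\nabla g(\blambda)^\top(\blambda-\blambda^*)\ge g(\blambda)-g(\blambda^*)\ge 0$, so subtracting it does not preserve an upper bound. For $g(x)=x^2$, $x^*=0$, $\beta=2$, the paper's intermediate inequality would read $x^2\le -x^2$. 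Your KKT step is exactly the ingredient that makes the lemma true, so your route is the sound one.

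\textbf{Delete the boundary-case aside.} The sign there is backwards. If $\nabla f(\blambda^*)^\top(\blambda-\blambda^*)\le 0$, then $-\nabla f(\blambda^*)^\top(\blambda-\blambda^*)\ge 0$ adds to $f(\blambda^*)-f(\blambda)$. That is the sign the lower bound in Lem.~\ref{lem:error.inversion} uses, not the one an upper bound needs.

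Suppose $\blambda^*$ were only the restricted optimum with an active constraint $\lambda_i\ge\lambda_{\min}$. Then $f(\blambda^*)-f(\blambda)$ would grow linearly in $\|\blambda-\blambda^*\|$ along some feasible directions. No quadratic bound can hold, and no constant slack (a factor $3$ versus $1$) can absorb a linear term. Your suggested ``cruder treatment'' of the gradient term fails for the same reason.

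Under the paper's definition this situation never arises. Even if $\lambda^*_i=\lambda_{\min}$ exactly for some $i$, $\blambda^*$ is still the full-simplex optimum, so its gradient is still $c\mathbf{1}$.

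Finally, state explicitly that interiority uses $w<1$ as well as $\sigma_{\min}>0$. The blow-up of the $\sigma_i/\sqrt{\lambda_i}$ term only matters when its coefficient $1-w$ is nonzero.
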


In both cases, the bounds depend on the shape of~$f$ through the parameters of strong concavity $\alpha$ and smoothness $\beta$, which in turn depends on the constrained simplex $\wb{\D}_K$ and the choice of $\lambda_{\min}$.
%
Before stating the regret bound, we need to introduce an assumption on $\blambda^*$.

\begin{assumption}\label{asm:simplex}
Let $\lambda_{\min}^* = \min_i \lambda^*_i$ be the smallest proportion over the arms in the optimal allocation and let $\wb{\D}_K$ the restricted simplex used in the algorithm. We assume that the weight parameter $w$ and the distributions $\{\nu_i\}_i$ are such that $\lambda_{\min}^* \geq \lambda_{\min}$, that is $\blambda^*\in\wb{\D}_K$.
\end{assumption}

Notice that whenever all arms have non-zero variance and $w<1$, $\lambda_{\min}^*>0$ and there always exists a non-zero $\lambda_{\min}$ (and thus a set $\wb{\D}_K$) for which the assumption can be verified. In general, the larger and more similar the variances and the smaller $w$, the bigger $\lambda_{\min}^*$ and less restrictive the assumption. The choice of $\lambda_{\min}$ also affects the final regret bound.

\begin{theorem}\label{thm:regret}
We consider a MAB with $K \geq 2$ arms with mean $\{\mu_i\}$ and variance $\{\var_i\}$. Under Asm.~\ref{asm:simplex}, \forcing with a parameter $\eta\leq 21$ and a simplex $\wb{\D}_K$ restricted to $\lambda_{\min}$ suffers a regret 
\begin{align*}
R_n(\wt{\blambda}) \leq
\begin{cases}
1 & \mbox{if } n \leq n_0 \\
\mathlarger{43 K^{5/2} \frac{\beta}{\alpha} \sqrt{\frac{\log(2/\delta_n)}{\eta\lambda_{\min}}} n^{-1/4}} &\mbox{if } n_0 < n \leq n_2\\
\mathlarger{153 K^{5/2} \frac{\beta}{\alpha} \sqrt{\frac{\log(2/\delta_n)}{\lambda_{\min}\lambda_{\min}^*}}}n^{-1/2} & \mbox{if } n > n_2,
\end{cases}
\end{align*}
\vspace{-0.1in}

with probability $1-\delta$ (where $\delta_n =  \delta/(4Kn(n+1))$) and
\begin{align*}
n_0 &= K(K\eta^2 + \eta\sqrt{K} + 1),\\
n_2 &= \frac{C}{(\lambda_{\min}^*)^8} \frac{K^{10}}{\alpha^4} \frac{\log^2(1/\delta_n)}{\lambda_{\min}^2}\CommaBin
\end{align*}
where $C$ is a numerical constant.
\end{theorem}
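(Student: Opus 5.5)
We condition throughout on the event of Prop.~\ref{prop:confidence.intervals}, which holds with probability $1-\delta$. Our aim is to bound $\|\wt{\blambda}_n-\blambda^*\|$ and then convert this into a regret bound via Lem.~\ref{lem:error.allocation.opt}, namely $R_n\le \frac{3\beta}{2}\|\wt{\blambda}_n-\blambda^*\|^2$. Strictly this lemma needs $\wt{\blambda}_n\in\wb{\D}_K$. We will instead apply it in the slightly extended form that the $\beta$-smoothness argument supports, as long as every $\wt\lambda_{i,n}$ stays bounded below by a constant multiple of $\lambda_{\min}$. This is what the case analysis below will guarantee. For $n\le n_0$ we only use the trivial bound. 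The forcing rule guarantees $T_{i,t}\ge \eta\sqrt t-1$ for all arms once $t>n_0$. The threshold $n_0=K(K\eta^2+\eta\sqrt K+1)$ is exactly the time after which forcing no longer consumes every round, so tracking steps actually occur. Under our normalization $f$ takes values in a bounded range, and this gives $R_n\le 1$ in that regime.

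\textbf{Step 1: accuracy of $\wh{\blambda}_t$.} Since $T_{i,t}\gtrsim\eta\sqrt t$, Prop.~\ref{prop:confidence.intervals} gives $\epsilon^\mu,\epsilon^\sigma\lesssim\sqrt{\log(2/\delta_t)/(\eta\sqrt t)}$. We then apply Lem.~\ref{lem:error.distro} (more precisely its uniform version, Cor.~\ref{cor:error.distro.ext}) on $\wb{\D}_K$, where $\min_i\lambda_i\ge\lambda_{\min}$. This yields $\sup_{\blambda\in\wb{\D}_K}|f(\blambda;\nu)-f(\blambda;\wh\nu_t)|\le \epsilon_t:=c\sqrt{\log(2/\delta_t)/(\lambda_{\min}\eta)}\,t^{-1/4}$. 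Since $\blambda^*\in\wb{\D}_K$ by Asm.~\ref{asm:simplex}, and $\wh{\blambda}_t$ maximizes the empirical $f$ over $\wb{\D}_K$, we get $f^*-f(\wh{\blambda}_t)\le 2\epsilon_t$. Lem.~\ref{lem:error.inversion} then gives $|\wh\lambda_{i,t}-\lambda^*_i|\le \sqrt{4K\epsilon_t/\alpha}$. (Alternatively, strong concavity directly gives $\|\wh{\blambda}_t-\blambda^*\|^2\lesssim\epsilon_t/\alpha$.)

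\textbf{Step 2: tracking.} We show that the tracking rule keeps $\wt{\blambda}_t$ close to the moving target $\wh{\blambda}_t$. The standard argument runs as follows. Suppose arm $i$ is over-pulled at time $t$, i.e.\ $\wt\lambda_{i,t}>\wh\lambda_{i,t}+\xi$. Then arm $i$ is not pulled by tracking. It can only be pulled by forcing, and forcing contributes at most $O(\eta\sqrt t)$ pulls per arm. Hence its excess decays like $1/t$, up to the drift of $\wh{\blambda}_s$ for $s\le t$. By Step 1 that drift is bounded by $\max_{s\ge t/2}\|\wh{\blambda}_s-\blambda^*\|$ plus a residual coming from the first half of the horizon. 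Because the coordinates sum to one, over-pulling controls under-pulling, with a factor $K$ loss. The outcome is $\max_i|\wt\lambda_{i,n}-\lambda^*_i|\lesssim K\big(\sqrt{K\epsilon_{n}/\alpha}+\eta K/\sqrt n\big)$. In particular $\wt\lambda_{i,n}\ge\lambda^*_i/2\ge\lambda_{\min}/2$ once $n$ is large, which justifies the use of Lem.~\ref{lem:error.allocation.opt}. Squaring and multiplying by $3\beta/2$ gives the middle regime $K^{5/2}\frac{\beta}{\alpha}\sqrt{\log(2/\delta_n)/(\eta\lambda_{\min})}\,n^{-1/4}$. The $\eta\le21$ condition keeps the forcing term $\eta K/\sqrt n$ dominated.

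\textbf{Step 3: bootstrapping to $n^{-1/2}$; the main obstacle.} Once $n>n_2$, Step 2 implies $\wt\lambda_{i,n}\ge\lambda^*_{\min}/2$. This means every arm is in fact pulled linearly often, $T_{i,n}\ge\lambda^*_{\min}n/2$, far more than forcing requires. We then redo Step 1 with $T_{i,t}\gtrsim\lambda^*_{\min}t$. This gives $\epsilon_t\asymp\sqrt{\log(2/\delta_t)/(\lambda_{\min}\lambda^*_{\min}t)}$, and the Step 2 chain then produces the stated $n^{-1/2}$ rate. The value of $n_2$ is obtained by solving $K\sqrt{K\epsilon_n/\alpha}\le\lambda^*_{\min}/4$ with $\epsilon_n\propto n^{-1/4}$. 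This gives $n\gtrsim K^{10}\log^2(1/\delta_n)/(\alpha^4(\lambda^*_{\min})^8\lambda_{\min}^2)$, matching the form of $n_2$. I expect the delicate part to be Step 2. The difficulty is a clean potential argument for $\arg\max_i(\wh\lambda_{i,t}-\wt\lambda_{i,t})$ against a target that is itself changing and accurate only at rate $t^{-1/4}$. The constants $43$ and $153$ will come out of that bookkeeping.
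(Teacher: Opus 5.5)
\textbf{Overall.} Your architecture is the paper's. Forced exploration gives $T_{i,n}\ge\eta\sqrt n$ (Cor.~\ref{cor:exploration}). Cor.~\ref{cor:error.distro.ext} and Lem.~\ref{lem:final.error} then give $f^*-f(\wh{\blambda}_n)=\wt O(n^{-1/4})$, and Lem.~\ref{lem:error.inversion} turns this into $|\wh\lambda_{i,n}-\lambda^*_i|=\wt O(n^{-1/8})$. The tracking lemma (Lem.~\ref{lem:tracking}) transfers this to $\wt{\blambda}_n$. Your ``over-pulled arms are never tracked, forcing adds only $O(\eta\sqrt n)$ pulls'' is the same mechanism; the paper folds the forcing pulls into the threshold via $2n\omega(n)\ge\eta\sqrt n+1$, which is where $\eta\le 21$ enters through Lem.~\ref{lem:n1}. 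After $n_2$, the linear bound $T_{i,n}\ge n\lambda^*_{\min}/2$ is fed back into the same chain. Relative to the paper, nothing structural is missing.

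\textbf{Gap: applying Lem.~\ref{lem:error.allocation.opt} in the middle regime.} You rightly note that Lem.~\ref{lem:error.allocation.opt} needs the segment between $\wt{\blambda}_n$ and $\blambda^*$ to stay away from the boundary, since the Hessian entries are $\frac{3(1-w)\sigma_i}{4K}\lambda_i^{-5/2}$. For $n>n_2$ your fix works: $\wt\lambda_{i,n}\ge\lambda^*_i/2\ge\lambda_{\min}/2$ by Asm.~\ref{asm:simplex}, so the lemma holds with $2^{5/2}\beta$ and only the numerical constant changes.

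For $n_0<n\le n_2$, however, that lower bound is exactly what you do not have, since $n_2$ is defined as the time from which it holds. The only lower bound your argument gives there is the forcing one, $\wt\lambda_{i,n}\ge\eta/\sqrt n$. At that level the curvature along the segment can be of order $(\sqrt n/\eta)^{5/2}$ rather than $\lambda_{\min}^{-5/2}$. Likewise, the only a priori control on $\frac{1-w}{K}\sum_i\sigma_i/\sqrt{\wt\lambda_{i,n}}$ grows like $n^{1/4}/\sqrt\eta$. So ``squaring and multiplying by $3\beta/2$'' is not justified in the middle regime.

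The paper has the same hole: its Step~5 applies Lem.~\ref{lem:error.allocation.opt} to $\wt{\blambda}_n$ without any check. You are therefore not behind the paper, but you should not claim your case analysis closes it. Closing it would need a separate argument that tracking toward targets in $\wb{\D}_K$ keeps $\wt\lambda_{i,n}$ above an explicit level, with $\beta$ recomputed at that level.

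\textbf{Gap: the case $n\le n_0$.} $R_n\le 1$ does not follow from boundedness of $f$, because $f$ is unbounded below near the boundary of the simplex. For $n\le K$, some arm has $T_{i,n}=0$, so $R_n=+\infty$ when $\sigma_{\min}>0$. The paper asserts this case without proof.
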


\textbf{Remark 1 (dependence on $n$).} 
The previous bound reveals the existence 
of three phases. For $n\leq n_0$, we are in a fully explorative phase where the pulls are always triggered by the forcing condition, 
the allocation $\wt{\blambda}_n$ is uniform over arms; and 
it can be arbitrarily bad w.r.t.\ $\blambda^*$. 
In the second phase, the algorithm interleaves forcing and tracking but the estimates $\{\wh{\nu}_{i,n}\}$ are not accurate enough to guarantee that $\wh{\blambda}_n$ performs well. In particular, 
we can only guarantee that all arms are selected $\eta\sqrt{n}$, which implies the regret decreases very slowly as $\wt{O}(n^{-1/4})$. 
Fortunately, as the estimates become more accurate, $\wh{\blambda}_n$ approaches $\blambda^*$, and
 after $n_2$ steps the algorithm successfully tracks 
$\blambda^*$ and achieves the asymptotic regret of $\wt{O}(n^{-1/2})$. This regret 
matches the minimax rate for regret minimization and active exploration (e.g., \gafs). 
This shows that operating a trade-off between rewards and errors is not fundamentally more difficult than optimizing either of the objectives individually. While in this analysis, the second and third phases are sharply separated (and $n_2$ may be large), in practice the performance gradually improves as $\wh\blambda$ approaches $\blambda^*$.
%


\textbf{Remark 2 (dependence on parameters).} 
$\lambda_{\min}$ has a major impact on the bound.
The smaller its value, the higher the regret, both explicitly and through the smoothness $\beta$. At the same time, the larger $\lambda_{\min}$ the stricter Asm.~\ref{asm:simplex}, which limits the validity of Thm.~\ref{thm:regret}. A possible compromise is to set $\lambda_{\min}$ to an appropriate decreasing function of $n$, thus making Asm.~\ref{asm:simplex} always verified (for a large enough $n$), at the cost of worsening the rate of the regret. In the experiments, we run \forcing with $\lambda_{\min}\!=\!0$ without the regret being negatively affected. We conjecture that we can always set $\lambda_{\min}\!=\!0$ (for which Asm.~\ref{asm:simplex} is always verified), while the bound could be refined by replacing $\lambda_{\min}$ (the \forcing parameter) with $\lambda_{\min}^*$ (the minimum optimal allocation). 
Nonetheless, we point out that this would require to significantly change the structure of the proof as Lem.~\ref{lem:error.distro} does not hold anymore when $\lambda_{\min}=0$.

\textbf{Remark 3 (dependence on the problem).} The remaining terms in the bound depend on the number of arms $K$, $w$, $\sigma_{\min}^2$ (through $\alpha$), and $\lambda_{\min}^*$. By definition of $\alpha$, we notice that as $w$ tends to 1 (pure reward maximization), the bound gets worse. This is expected since the proof relies on the strong convexity of $f_w$ to relate the accuracy in estimating $f_w$ and the accuracy of the allocations (see Lem.~\ref{lem:error.inversion}). Finally, the regret has an inverse dependence on $\lambda_{\min}^*$, which shows that if the optimal allocation requires an arm to be selected only a very limited fraction of the time, the problem is more challenging and the regret increases. This may happen in a range of different configurations such as the large value of $w$ or when one arm has very high mean and variance, which leads to a $\blambda^*$ highly concentrated on one single arm and $\lambda_{\min}^*$ very small. A very similar dependence is already present in previous results for active exploration (see e.g.,~\cite{carpentier2011upper-confidence-bound}). 

\textbf{Remark 4 (proof).} A sketch and the complete proof are in App.~\ref{s:thm.proof}. While the proof shares a similar structure as \gafs's, in \gafs we have access to an explicit form of the optimal allocation $\blambda^*$ and the proof directly measures the difference between allocations. Here, we have to rely on Lemmas~\ref{lem:error.inversion} and~\ref{lem:error.allocation.opt} to relate allocations to objective functions and vice versa. In this sense, our analysis is a generalization of the proof in \gafs and it can be applied to any strongly convex and smooth objective function.


\vspace{-0.1in}
\section{Experiments}\label{s:experiments}
\vspace{-0.1in}

We evaluate \forcing on synthetic data and a problem directly derived 
from an educational application. 
Additional experiments are in the appendix.

\subsection{Synthetic Data}\label{ss:synthetic}



\begin{figure*}[t]
\vspace{-0.1in}
\centering
\includegraphics[trim={2.5cm 6.0cm 2cm 7cm},clip,width=0.32\textwidth]{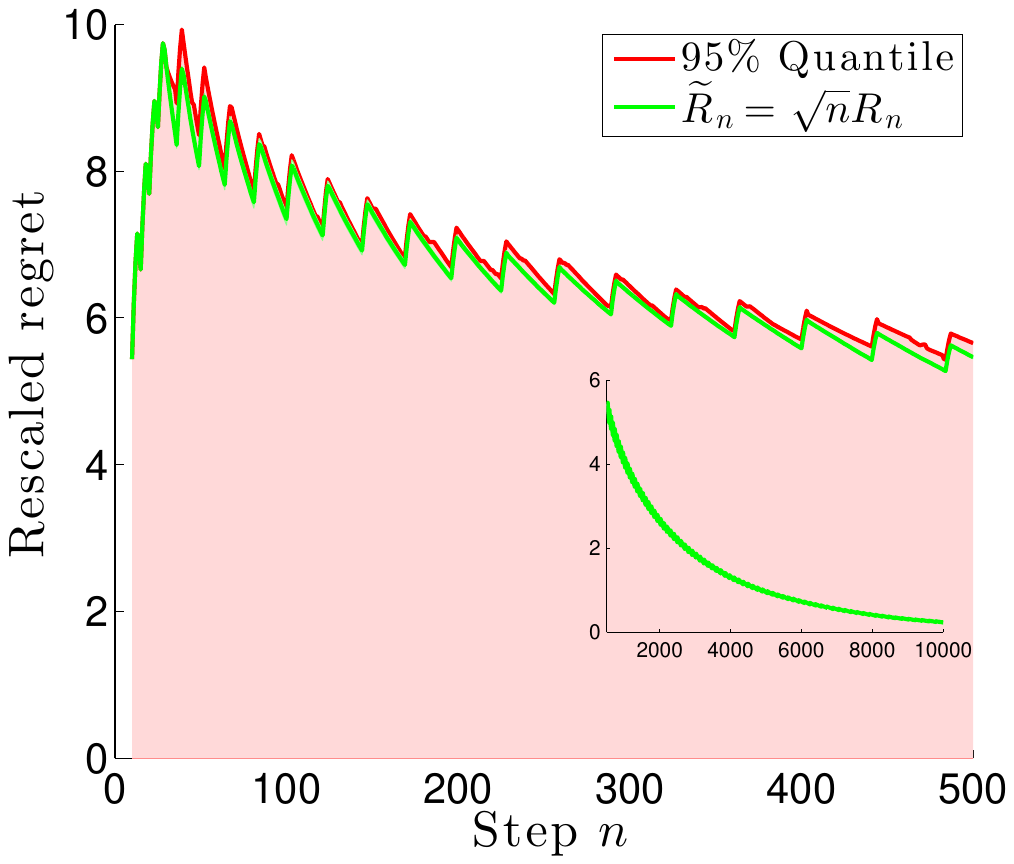}
\includegraphics[trim={1.0cm 0cm 0.1cm 0cm},clip, width=0.33\textwidth]{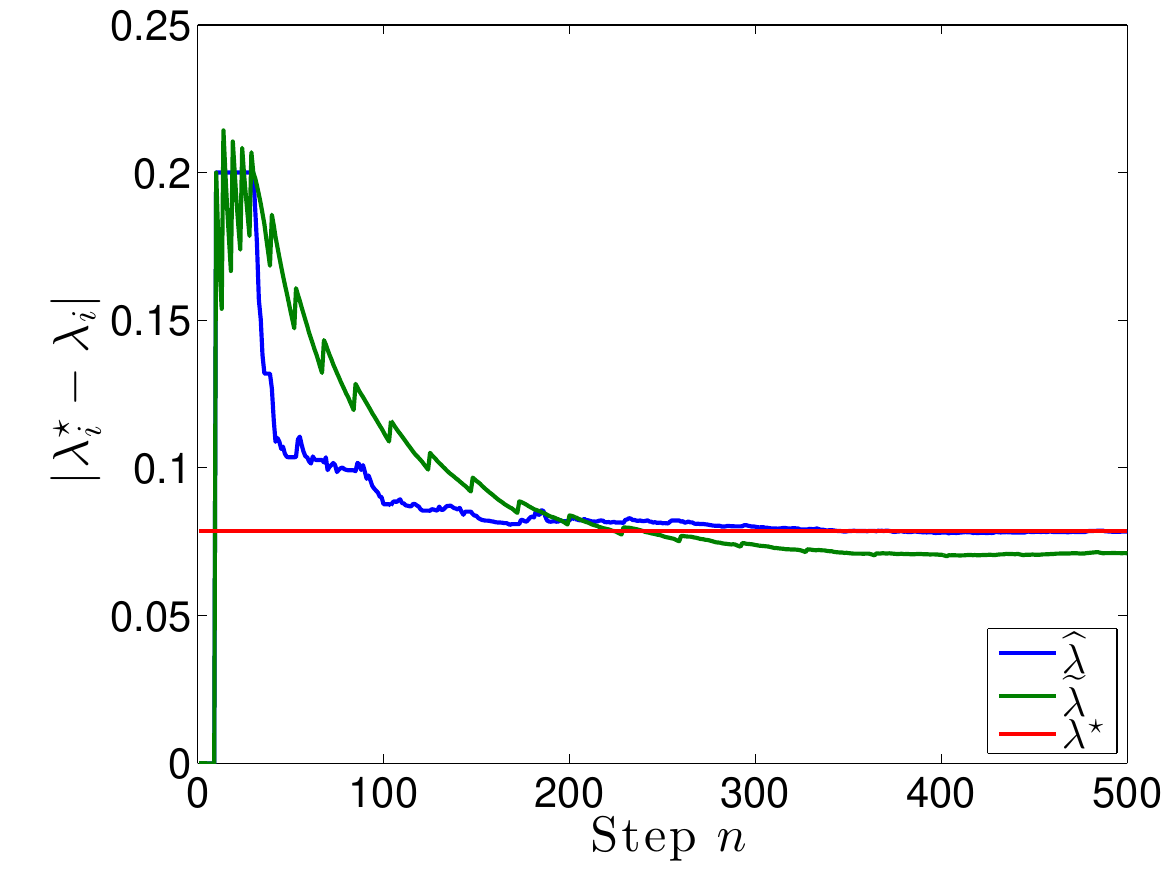}
\includegraphics[trim={0.25cm 0cm 0cm 0cm},clip,width=0.33\textwidth]{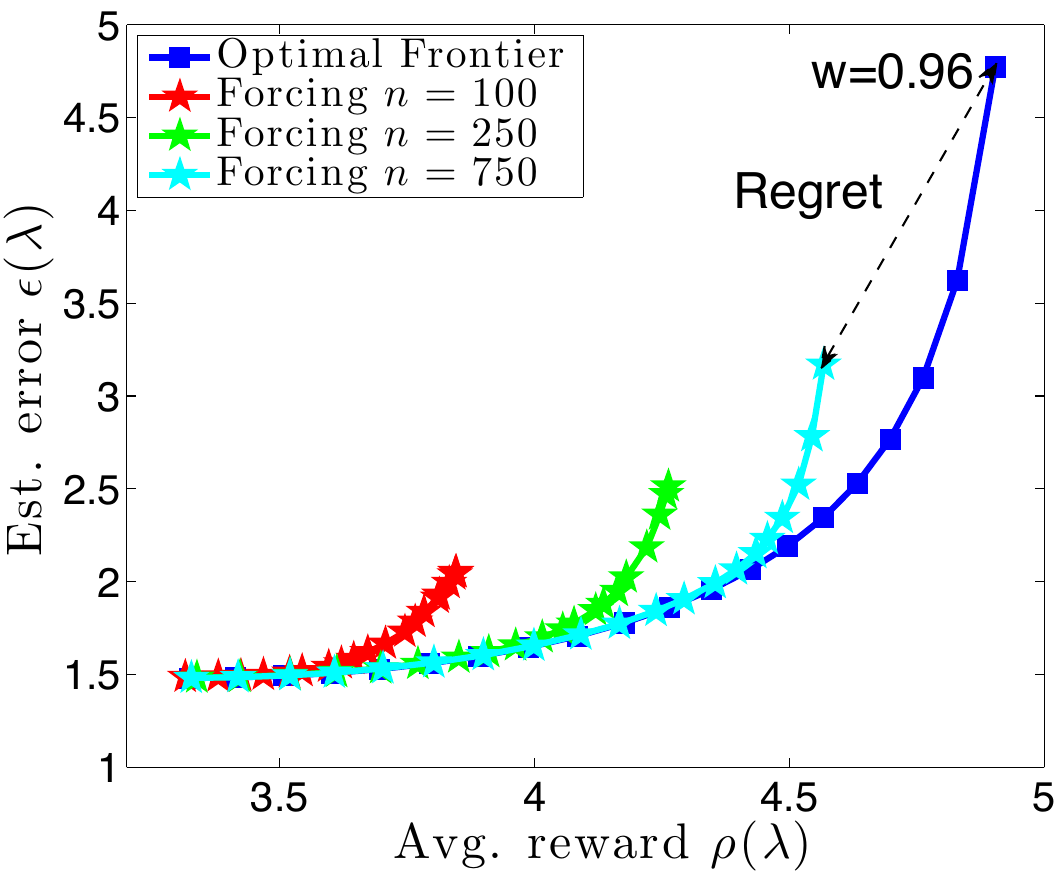}
\vspace{-0.15in}
\caption{\small Rescaled regret \textit{(left)}, allocations errors \textit{(center)}, Pareto frontier \textit{(right)} for the setting in Fig.~\ref{tab:exp.setting}.}
\vspace{-0.1in}
\label{fig:exp3}
\end{figure*}


\begin{figure}[h!]
\vspace{15pt}
\begin{small}
\centering
\renewcommand{\arraystretch}{1.1}
\begin{tabular}{| c || c | c | c |}
\hline
 & $\mu$ & $\var$ & $\blambda^*$ \\
\hline
\hline
Arm1 & 1.0 & 0.05 & 0.0073 \\
\hline
Arm2 & 1.5 & 0.1 & 0.01 \\
\hline
Arm3 & 2.0 & 0.2 & 0.014 \\
\hline
Arm4 & 4.0 & \textbf{4.0} & 0.0794 \\
\hline
Arm5 & \textbf{5.0}  & 0.5 &   0.8893 \\
\hline
\end{tabular}
\caption{\small Arm mean, variance and optimal allocation for $w=0.9$.}
\label{tab:exp.setting}
\end{small}
\end{figure}

We consider a MAB with $K=5$ arms with mean and variance given in Fig.~\ref{tab:exp.setting}. While $\rho(\blambda)$ is optimized by always pulling  arm $5$, $\varepsilon(\blambda)$ is minimized by an allocation  selecting more often arm $4$ that has the larger variance (for $w=0$, the optimal allocation $\lambda_4^*$ is over $0.41$). For $w=0.9$ (i.e., more weight to cumulative reward than estimation error) the optimal allocation $\blambda^*$ is clearly biased towards arm $5$ and only partially to arm $4$, while all other arms are pulled only a limited fraction of time (well below $2\%$). We run \forcing with $\eta=1$ and $\lambda_{\min}=0$ and we average over 200 runs.

\textbf{Dependence on $n$.}
In Fig.~\ref{fig:exp3}-\textit{(left)} we report the average and the $0.95$-quantile of the rescaled regret $\wt{R}_n = \sqrt{n} R_n$. From Thm.~\ref{thm:regret} we expect the rescaled regret to increase as $\sqrt{n}$ in the first exploration phase, then to increase as $n^{1/4}$ in the second phase, and finally converge to a constant (i.e., when the actual regret enters into the asymptotic regime of $\wt{O}(n^{-1/2})$). From the plot we see that this is mostly verified by the empirical regret, although there is a transient phase during which the rescaled regret decreases over $n$, which suggests that the actual regret may decrease with a faster rate, at least in a first moment. This behavior may be captured in the theoretical analysis by replacing the use of Hoeffding bounds with Bernstein concentration inequalities, which may reveal faster rate (up to $\wt{O}(1/n)$) whenever $n$ and the standard deviations are small. 

\textbf{Tracking.}
In Fig.~\ref{fig:exp3}-\textit{(center)}, we study the behavior of the estimated allocation $\wh{\blambda}$ and the actual allocation $\wt{\blambda}$ (we show $\wh{\lambda}_4$ and $\wt{\lambda}_4$) w.r.t.~the optimal allocation ($\lambda_4^* = 0.0794$). In the initial phase, $\wh{\blambda}$ 
is basically uniform ($1/K$) since the algorithm is always in forcing mode. 
After the exploration phase, $\wh{\blambda}$ is computed on the estimates that are already quite accurate, and it rapidly converges to $\blambda^*$. At the same time, $\wt{\blambda}$ keeps tracking the estimated optimal allocation and it also tends to converge to $\blambda^*$ but with a slightly longer delay.
We further study the tracking rule in the appendix.

%
%
\textbf{Pareto frontier.}
In Fig.~\ref{fig:exp3}-\textit{(right)} we study the performance of the optimal allocation $\blambda^*$ for varying weights~$w$. We report the Pareto frontier in terms of average reward $\rho(\blambda)$ and average estimation error $\varepsilon(\blambda)$. The optimal allocation smoothly changes from focusing on arm 4 to being almost completely concentrated on arm 5 ($\lambda^*_4 = 0.41$ and $\lambda^*_5 = 0.20$ for $w=0.0$ and $\lambda^*_4=0.0484$ and $\lambda^*_4=0.9326$ for $w=0.95$). As a result, we move from an allocation with very low estimation error but poor reward to a strategy with large reward but poor estimation. We report the Pareto frontier of \forcing for different values of $n$. In this setting, \forcing is more effective in approaching the performance of $\blambda^*$ for small values of $w$. This is consistent with the fact that for $w=0$, $\lambda_{\min}^* = 0.097$, while it decreases to $0.004$ for $w=0.95$, which increases the regret as illustrated by Thm.~\ref{thm:regret}.



\vspace{-0.1in}
\subsection{Educational Data}\label{ss:education}
\vspace{-0.1in}

%


\begin{figure*}[t!]

\vspace{-3pt}
\begin{minipage}{0.45\textwidth}
\begin{center}
\includegraphics[width=0.89\columnwidth]{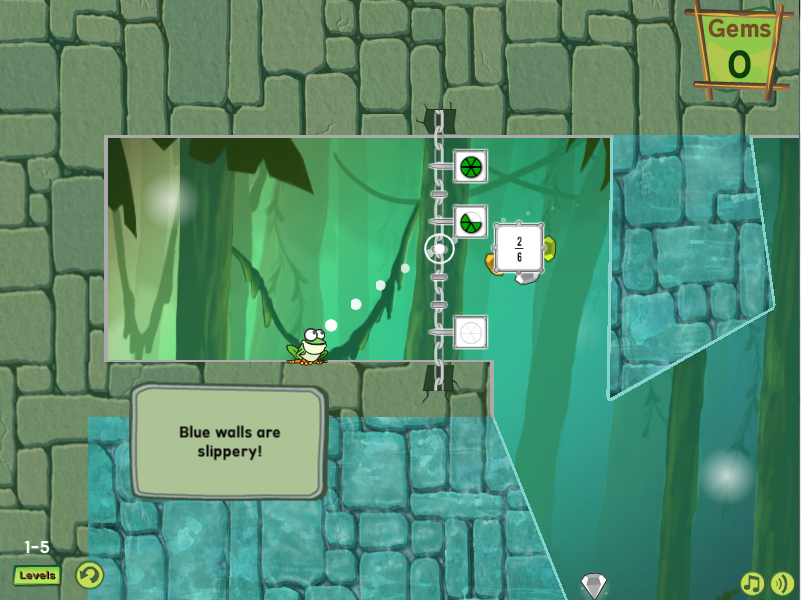}
\caption{Treefrog Treasure, a math game about number lines.}
\label{fig:Treefrog}
\vspace{-4pt}
\end{center}
\end{minipage}
\begin{small}
\begin{minipage}{0.6\textwidth}
\centering
\renewcommand{\arraystretch}{1.1}
\begin{tabular}{|c|c|c|c|c|c|} 
\hline
 Alg. & {\small$\frac{\varepsilon(\blambda)}{\var_{\max}}$} & {\small$\frac{\rho(\blambda)}{\mu_{\max}}$} & {\small$R_n$} & {\small\textit{RelDCG}} & {\small\textit{RankErr}} \\ 
 \hline
 \hline
\multicolumn{6}{|c|}{$w=0.95$} \\
\hline
$\blambda^*$ & 6.549 & 0.9405 & - & - & - \\ 
 \hline 
\forcingshort & 6.708 & 0.9424 & \textbf{1.878} & 0.1871 & 5.935 \\ 
 \hline 
 \ucb & 11.03 & \textbf{0.9712} & 95.15 & 1.119 & 8.629 \\  
 \hline 
\gafsshort & \textbf{5.859} & 0.9183 & 17.79 & \textbf{0.1268} & \textbf{5.117} \\ 
 \hline 
\textit{Unif} & 5.861 & 0.9168 & 20.49 & 0.132 & 5.25 \\ 
 \hline 
\hline
\multicolumn{6}{|c|}{$w=0.6$} \\
\hline
$\blambda^*$ & 5.857 & 0.9189 & - & - & - \\ 
 \hline 
\forcingshort & \textbf{5.859} & 0.92 & \textbf{0.4437} & \textbf{0.1227} & 5.178 \\ 
 \hline 
\ucb & 11.03 & \textbf{0.9712} & 1343 & 1.119 & 8.629 \\ 
 \hline 
\gafsshort & \textbf{5.859} & 0.9183 & 1.314 & 0.1268 & \textbf{5.117} \\ 
 \hline 
\textit{Unif} & 5.861 & 0.9168 & 3.482 & 0.132 & 5.25 \\ 
 \hline 
 \end{tabular}

\caption{Results on the educational dataset.}
\label{tab:educ.results}
\end{minipage}
\end{small}
\vspace{-0.10in}
\end{figure*}

\textit{Treefrog Treasure} is an educational math game in which players navigate through a world of number lines. Players must find and jump through the appropriate fraction on each number line. 
To analyze the effectiveness of our algorithm when parameters are drawn from a real-world setting, we use data from an experiment in \textit{Treefrog Treasure} to estimate the means and variances of a 64-arm experiment. Each arm corresponds to a different experimental condition: After a tutorial, 34,197 players each received a pair of number lines with different properties, followed by the same (randomized) distribution of number lines thereafter. We measured how many number lines students solved conditioned on the type of this initial pair; the hope is to learn which type of number line encourages player persistence on a wide variety of number lines afterwards.
There were a total of $K=64$ conditions, formed from choosing between 2 representations of the target fraction, 2 representations of the label fractions on the lines themselves, adding or withholding tick marks at regular intervals on the number line, adding or removing hinting animations if the problem was answered incorrectly, and 1-4 different rates of backoff hints that would progressively offer more and more detailed hints as the player made mistakes. The details of both the experiments and the experimental conditions are taken from~\citet{liu2014trading}%
, though we emphasize that we measure a different outcome in this paper (player persistence as opposed to chance of correct answer).

We run \forcing, standard \ucb, \gafs (adapted to minimize the average estimation error) over $n=25,000$ and 100 runs. Both \forcing and \gafs use $\eta=1$ and $w$ is set to 0.6 to give priority to preference to the accuracy of the estimates and to 0.95 to favor the player's experience and entertainment.
We study the performance according to the average reward $\rho(\blambda)$ (normalized by the largest mean), the estimation error $\varepsilon(\blambda)$ (normalized by the largest standard deviation), the rescaled regret $\sqrt{n}R_n$, the relative \textit{discounted cumulative gain} (DCG) and the \textit{RankErr} that measure how well arms are ranked on the basis of their mean.\footnote{Let $\pi^*$ be the true ranking and $\wh{\pi}$ the estimated ranking (i.e., $\wh{\pi}(k)$ returns the identity of the arm ranked at position $k$), the DCG is computed as $\text{DCG}_{\pi} = \sum_{k=1}^K \frac{\mu_{\pi(k)}}{\log(k+1)}$ and then we compute $(\text{DCG}_{\pi^*} - \text{DCG}_{\wh{\pi}})/\text{DCG}_{\pi^*}$, while $RankErr = 1/K\sum_{i=1}^K |\pi^*(i) - \wh{\pi}(i)|$.} Small values of \textit{RelDCG} and \textit{RankErr} mean that arms are estimated well enough to correctly rank them and can allow 
the experiment designer to later 
reliably remove the worst performing arms. 
The results are reported in Fig.~\ref{tab:educ.results}. Since \ucb, \gafs, and \textit{Unif} do not depend on $w$, their performance is constant except for the regret, which is computed w.r.t.\ to different $\blambda^*$s. As expected, \ucb achieves the highest reward but it performs very poorly in estimating the arms' mean and in ranking them. \gafs does not collect much reward but is very accurate in the estimate of the means. On the other hand, \forcing balances the two objectives and it achieves the smallest regret. We notice that \forcing preserves a very good estimation accuracy without compromising too much the average reward (for $w=0.95$).
%
In this situation, effectively balancing between the two objectives allows us to rank the different game settings in the right order while providing players with a good experience. Had we used \ucb, the outcome for players would have been better, but the designers would have less insight into how the different ways of providing number lines affect player behavior for when they need to design the next game (high \textit{RankErr}). Alternatively, using \gafs would give the designer excellent insight into how different number lines affect players; however, if some 
conditions are 
too difficult, 
we could have caused many players to quit. 
 \forcing provides a useful feedback to the designer without compromising the players' experience (the \textit{RankErr} is close to \gafs but the reward is higher). This is more evident when moving to $w=0.95$, where \forcing significantly improves the reward w.r.t.\ \gafs without losing much accuracy in ranking the arms.

%


\vspace{-0.1in}
\section{Conclusions}\label{s:conclusions}
\vspace{-0.1in}

We studied the tradeoff between rewards and estimation errors. We proposed a new formulation of the problem, introduced a variant of a forced-sampling algorithm, derived bounds on its regret, and we validated our results on synthetic and educational data.

There are a number of possible directions for future work.  \textbf{1)} An active exploration strategy tends to pull all arms a linear \todomout{check this} fraction of time while minimizing regret requires selecting sub-optimal arms a sublinear number of times. It would be interesting to prove an explicit incompatibility result between maximizing $\rho(\blambda)$ and minimizing $\varepsilon(\blambda)$ similar to the result of~\citet{bubeck2009pure} for simple and cumulative regret.
\textbf{2)} While a straightforward application of the \ucb fails, alternative formulations, such as using upper-bounds on both means and variances, could overcome the limitations of \naive. Nonetheless, the resulting function $f_w(\cdot; \{\wt{\nu}_{i,n}\})$ is neither an upper nor a lower bound on the true function $f_w(\cdot; \{\nu_i\})$ and the regret analysis could be considerably more difficult than for \forcing. Furthermore, it would be interesting to study how a Thompson sampling approach could be adapted.
\textbf{3)} Finally, alternative tradeoffs can be formulated (e.g., simple vs.\@ cumulative regret). Notice that the current model, algorithm, and analysis could be easily extended to any strongly-convex and smooth function defined over some parameters of the arms' distributions.


\paragraph{Acknowledgements}
\label{sec:Acknowledgements}
The research presented in this paper was supported by French Ministry of
Higher Education and Research, Nord-Pas-de-Calais Regional Council, Inria and Carnegie Mellon University associated-team project EduBand, and French National Research Agency projects ExTra-Learn (n.ANR-14-CE24-0010-01) and BoB (n.ANR-16-CE23-0003). 

\newpage
\bibliography{ref}
\bibliographystyle{plainnat}

\appendix
\newpage
\onecolumn

\onecolumn
\section{Technical Lemmas}\label{s:proofs}

\begin{proof}[Proof of Lemma~\ref{lem:strong.concave}]
We study the function $g(\cdot) = -f_w(\cdot)$. For any $i=1,\ldots,K$ we have
\begin{align*}
\frac{\partial g(\blambda)}{\partial \lambda_i} = -w\mu_i - \frac{1-w}{2K} \frac{\sigma_i}{\lambda_i^{3/2}}\CommaBin
\end{align*}
and thus the Hessian of $g$ in $\blambda$ is
\begin{align*}
[H_g]_{i,j} = \frac{\partial^2 g(\blambda)}{\partial \lambda_i \partial \lambda_j} = \begin{cases} 0 & \mbox{if } i\neq j\\
\frac{3(1-w)}{4K}\frac{\sigma_i}{\lambda_i^{5/2}} & \mbox{otherwise} \end{cases},
\end{align*}
which means that we have a diagonal matrix. Thus it is easy to show that for any $\blambda\in\wb{\D}_K$, the Hessian is bounded as $\alpha I_{K} \preceq H_g \preceq \beta I_K $, which implies that the function $g$ is $\alpha$-strongly convex and $\beta$-smooth with parameters
\begin{align*}
\alpha &= \frac{3(1-w)\sigma_{\min}}{4K}\CommaBin\\
\beta &= \frac{3(1-w)\sigma_{\max}}{4K\lambda_{\min}^{5/2}}\cdot
\end{align*}
\end{proof}

\begin{proof}[Proof of Lemma~\ref{lem:error.distro}]
The statement is obtained by the series of inequalities
\begin{align*}
\Big|f(\blambda; \{\nu_i\}) &- f(\blambda; \{\wh{\nu}_{i}\})\Big| =\Bigg| w\sum_{i=1}^K \lambda_i(\mu_i - \wh{\mu}_{i}) + \frac{(1-w)}{K} \sum_{i=1}^K \frac{1}{\sqrt{\lambda_i}}(\wh{\sigma}_{i} - \sigma_i) \Bigg| \\
&\leq w\sum_{i=1}^K \lambda_i\big|\mu_i - \wh{\mu}_{i} \big| + \frac{(1-w)}{K} \sum_{i=1}^K \frac{1}{\sqrt{\lambda_i}}\big|\wh{\sigma}_{i} - \sigma_i\big| \\
&\leq w \max_i \epsilon^\mu_i + \frac{(1-w)}{\min_i \sqrt{\lambda_{i}}} \max_i \epsilon^\sigma_i,
\end{align*}
where in the last step we used that $\sum_i \lambda_i = 1$.
\end{proof}

We also derive a simple corollary of Lemma~\ref{lem:error.distro}, which extends the previous result to any (random) choice of allocation $\blambda\in\wb{\D}_K$.

\begin{corollary}\label{cor:error.distro.ext}
After $n$ steps, let $\{\wh{\nu}_{i,n}\}_i$ be the empirical distributions obtained after pulling each arm $T_{i,n}$ times. If we define $\delta_n = \delta/(4Kn^2(n+1))$, then
\begin{align*}
\mathbb{P}\Bigg[\forall n>0, \forall \blambda\in\wb{\D}_K, \big|f(\blambda; \{\nu_i\}) &- f(\blambda; \{\wh{\nu}_{i}\})\big| \leq \max_i\sqrt{\frac{2K\log(2/\delta_n)}{\lambda_{\min}T_{i,n}}}\Bigg] \geq 1-\delta,
\end{align*}
%
\end{corollary}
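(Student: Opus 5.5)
The plan is to combine Proposition~\ref{prop:confidence.intervals} with Lemma~\ref{lem:error.distro}. The key observation is that the bound in Lemma~\ref{lem:error.distro} depends on $\blambda$ only through $\min_i\lambda_i$. On $\wb{\D}_K$ this quantity is at least $\lambda_{\min}$. So once the confidence intervals on $\{\wh{\mu}_{i,n},\wh{\sigma}_{i,n}\}$ hold, the bound holds simultaneously for every $\blambda\in\wb{\D}_K$, including any random or data-dependent choice. In particular, no covering argument over $\wb{\D}_K$ is really needed.

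\textbf{Step 1: the high-probability event.} For a fixed number of samples, Maurer--Pontil (as in Proposition~\ref{prop:confidence.intervals}) gives the mean and standard-deviation bounds with confidence parameter $\delta_n$. The counts $T_{i,n}$ are random, so we take a union bound over
\begin{itemize}
\item the arms $i\in[K]$,
\item the possible values $T_{i,n}=s\in\{1,\ldots,n\}$,
\item the two inequalities (mean and standard deviation),
\item the time steps $n\ge1$.
\end{itemize}
With $\delta_n=\delta/(4Kn^2(n+1))$, the total failure probability is at most $\sum_n 2K\cdot n\cdot 2\delta_n = \sum_n \delta/(n(n+1)) = \delta$. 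This explains the extra factor $n$ relative to Proposition~\ref{prop:confidence.intervals}. On this event, for all $n$ and $i$,
\[
\epsilon^\mu_i=\sqrt{\log(1/\delta_n)/(2T_{i,n})}, \qquad \epsilon^\sigma_i=\sqrt{2\log(2/\delta_n)/T_{i,n}}.
\]

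\textbf{Step 2: plug into Lemma~\ref{lem:error.distro}.} For any $\blambda\in\wb{\D}_K$ the lemma gives
\[
w\max_i\epsilon^\mu_i+\frac{1-w}{\sqrt{\lambda_{\min}}}\max_i\epsilon^\sigma_i .
\]
Since $\log(1/\delta_n)\le\log(2/\delta_n)$, we have $\epsilon^\mu_i\le \tfrac12\sqrt{2\log(2/\delta_n)/T_{i,n}}$. Since $\lambda_{\min}\le 1/K\le1$, the first term is at most $\frac{w}{\sqrt{\lambda_{\min}}}$ times the same quantity. Combining the two terms via $w+(1-w)=1$ yields a bound of order $\max_i\sqrt{2\log(2/\delta_n)/(\lambda_{\min}T_{i,n})}$. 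The stated bound has an extra $\sqrt{K}$, which is looser, so the inequality holds a fortiori. Alternatively, the $\sqrt{K}$ absorbs slack if one keeps the $1/K$ factor in $f_w$ more crudely.

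\textbf{Main obstacle.} The only delicate point is the union bound over random $T_{i,n}$. One must argue that the $s$-th sample of arm $i$ is i.i.d.\ regardless of the adaptive pulling rule. This holds by the usual stack-of-rewards construction, so applying the fixed-$s$ concentration for each $s\le n$ is valid. Everything else is bookkeeping of constants.
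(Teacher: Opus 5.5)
Your argument is correct, but it takes a different and simpler route than the paper. The paper treats uniformity over $\blambda$ as the crux. It takes a union bound over an $\epsilon$-cover of the simplex of cardinality at most $n^K$, bounds Lemma~\ref{lem:error.distro} on $\wb{\D}_K$ by $\max\{\max_i\epsilon^\mu_i,\max_i\epsilon^\sigma_i\}/\sqrt{\lambda_{\min}}$, and then appeals to a standard uniform-convergence argument from statistical learning theory. The $K$ inside the square root is presumably the price of the $\log(n^K)$ size of that cover.

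You instead observe that Lemma~\ref{lem:error.distro} is a deterministic implication. Its hypothesis concerns only the $2K$ parameter estimates, and its conclusion depends on $\blambda$ only through $\min_i\lambda_i\ge\lambda_{\min}$. So a single event on $\{\wh\mu_{i,n},\wh\sigma_{i,n}\}$ gives the bound simultaneously for every $\blambda\in\wb{\D}_K$, including data-dependent choices such as $\wh\blambda_n$. Your only union bound is over arms, the two estimators, the possible values of $T_{i,n}$ (legitimate via the stack-of-rewards construction), and time. That also explains cleanly the extra factor $n$ in $\delta_n$ relative to Proposition~\ref{prop:confidence.intervals}.

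Your route gives a bound smaller by a factor $\sqrt{K}$, from which the stated one follows a fortiori. It also removes the step the paper leaves implicit, namely extending a bound proved at the points of the cover to all of $\wb{\D}_K$. A cover would only earn its keep if one concentrated $f(\blambda;\{\wh\nu_i\})$ at a $\blambda$-dependent rate, e.g.\ by exploiting the weights in $\sum_i\lambda_i(\wh\mu_i-\mu_i)$. The paper does not do this, since it immediately passes to the $\blambda$-free worst case.
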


\begin{proof}
The proof is identical to the one of Lemma~\ref{lem:error.distro} together with a union bound over a covering of the simplex $\wb{\D}_K$ and Prop.~\ref{prop:confidence.intervals} for the concentration of $\wh\mu$ and $\wh\sigma$. We first notice than any covering of the unrestricted simplex also covers $\wb{\D}_K$. We sketch how to construct an $\epsilon$-cover of a $K$-dimensional simplex. For any integer $n = \lceil 1/\epsilon\rceil$, we can design a discretization $\D_K^{(n)}$ of the simplex defined by any possible (fractional) distribution $\wh{\blambda} = (\lambda_1,\ldots,\lambda_K)$ such that for any $\lambda_i$ there exists an integer $j$ such that $\lambda_i = j/n$. $\D_K^{(n)}$ is then an $\epsilon$ cover in $\ell_\infty$-norm since for any distribution $\blambda\in\D_K$ there exists a distribution $\wh{\blambda}\in\D_K^{(n)}$ such that $||\blambda - \wh{\blambda}||_\infty \leq 1/n \leq \epsilon$. The cardinality of $\D_K^{(n)}$ is (loosely) upper-bounded by $n^K$ ($n$ possible integers for each component $\lambda_i$). Upper-bounding the result of Lemma~\ref{lem:error.distro} by $\max\{ \max_i \epsilon^{\mu}_i; \max_i \epsilon^{\sigma}_i \}/\sqrt{\lambda_{\min}}$ (since we focus on $\blambda$ in the restricted simplex $\wb{\D}_K$) and following standard techniques in statistical learning theory (see e.g., Thm.~4 by~\citet{bousquet2003introduction}), we obtain the final statement.
\end{proof}


\begin{proof}[Proof of Lemma~\ref{lem:error.inversion}]

Let $g(\cdot) = -f_w(\cdot)$. For any pair of allocations $\blambda, \blambda'\in\wb{\D}_K$, from Taylor's theorem there exists an allocation $\blambda''$ such that
\begin{align*}
g(\blambda) = g(\blambda') + \nabla g(\blambda')^\top (\blambda-\blambda') + \frac{1}{2} (\blambda - \blambda')^\top H_g(\blambda'') (\blambda-\blambda').
\end{align*}
Given the bound from Lemma~\ref{lem:strong.concave} (strong convexity) and taking $\blambda'=\blambda^*$ (by Asm.~\ref{asm:simplex}, $\blambda^*\in\wb{\D}_K$) we have
\begin{align*}
g(\blambda) \geq g(\blambda^*) + \nabla g(\blambda^*)^\top (\blambda-\blambda^*) + \frac{\alpha}{2} ||\blambda - \blambda^*||_2^2.
\end{align*}
Since $\blambda^*$ is the optimal allocation over $\wb{\D}_K$, the gradient $\nabla g(\blambda^*)$ in $\blambda^*$ in direction towards $\blambda$ is nonnegative and thus
\begin{align*}
\frac{\alpha}{2} ||\blambda - \blambda^*||_2^2 \leq f(\blambda^*) - f(\blambda).
\end{align*}
Given that $||\blambda - \blambda^*||_\infty \leq ||\blambda - \blambda^*||_2$, we finally obtain
\begin{align*}
\max_{i=1,\ldots,K} |\lambda_i - \lambda_i^*| \leq \sqrt{\frac{2}{\alpha} \big(f(\blambda^*) - f(\blambda)\big)}.
\end{align*}
\end{proof}

\begin{proof}[Proof of Lemma~\ref{lem:error.allocation.opt}]

Let $g(\cdot) = -f_w(\cdot)$. For any pair of allocations $\blambda, \blambda'\in\wb{\D}_K$, from Taylor's theorem there exists an allocation $\blambda''$ such that
\begin{align*}
g(\blambda) = g(\blambda') + \nabla g(\blambda')^\top (\blambda-\blambda') + \frac{1}{2} (\blambda - \blambda')^\top H_g(\blambda'') (\blambda-\blambda').
\end{align*}
Given the bound from Lemma~\ref{lem:strong.concave} (smoothness) and taking $\blambda'=\blambda^*$ (by Asm.~\ref{asm:simplex}, $\blambda^*\in\wb{\D}_K$) we have
\begin{align*}
g(\blambda) \leq g(\blambda^*) + \nabla g(\blambda^*)^\top (\blambda-\blambda^*) + \frac{\beta}{2} ||\blambda - \blambda^*||_2^2.
\end{align*}
Consider the term $\nabla g(\blambda)^\top (\blambda-\blambda^*)$. By convexity of $g$, the gradient of $g$ in $\blambda$ towards the optimum $\blambda^*$ is negative. As a result we get
%
\begin{align*}
g(\blambda) \leq g(\blambda^*) + \big(\nabla g(\blambda^*) - \nabla g(\blambda)\big)^\top (\blambda-\blambda^*) + \frac{\beta}{2} ||\blambda - \blambda^*||_2^2.
\end{align*}
Using Cauchy-Schwarz inequality and the fact that for twice differentiable functions, the boundedness of the Hessian (i.e., the smoothness of function) implies that the gradient of $g$ is Lipschitz with coefficient $\beta$, we obtain
\begin{align*}
g(\blambda) \leq g(\blambda^*) + \beta||\blambda-\blambda^*||_2^2 + \frac{\beta}{2} ||\blambda - \blambda^*||_2^2.
\end{align*}
%
Substituting $g$ with $f$ we obtain the desired statement
\begin{align*}
f_w(\blambda^*; \{\nu_i\}) - f_w(\blambda; \{\nu_i\}) \leq \frac{3\beta}{2}\|\blambda-\blambda^*\|^2.
\end{align*}
\end{proof}

We introduce another useful intermediate lemma that states the quality of the estimated optimal allocation.

\begin{lemma}\label{lem:final.error}
Let $\wh{\nu}_{i,n}$ be the empirical distribution characterized by mean $\wh{\mu}_{i,n}$ and variance $\wh{\sigma}_{i,n}$ estimated using $T_{i,n}$ samples. If  $\wh{\blambda}_n = \arg\max_{\blambda\in\wb{\D}_K} f(\blambda; \{\wh{\nu}_i\})$) is the estimated optimal allocation, then
\begin{align*}
f^* - f(\wh{\blambda}_{i,n}; \{\nu_i\}) \leq \max_i2\sqrt{\frac{2K\log(2/\delta_n)}{\lambda_{\min}T_{i,n}}}\cdot
\end{align*}
with probability $1-\delta$, where $\delta_n = \delta/(4Kn^2(n+1))$.
\end{lemma}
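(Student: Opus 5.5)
We follow the standard ``estimation error of an empirical maximizer'' argument, built on the uniform bound of Cor.~\ref{cor:error.distro.ext}. Denote by $E_n$ the event on which, simultaneously for all $\blambda\in\wb{\D}_K$,
\begin{align*}
\big|f(\blambda; \{\nu_i\}) - f(\blambda; \{\wh{\nu}_{i,n}\})\big| \leq \epsilon_n := \max_i\sqrt{\frac{2K\log(2/\delta_n)}{\lambda_{\min}T_{i,n}}}.
\end{align*}
By Cor.~\ref{cor:error.distro.ext}, $E_n$ holds with probability at least $1-\delta$, uniformly over $n$. The uniformity over $\blambda$ is essential here. Since $\wh{\blambda}_n$ is itself a random function of the samples, Lem.~\ref{lem:error.distro} alone, which is stated for a fixed $\blambda$, does not apply to it. The covering argument in Cor.~\ref{cor:error.distro.ext} removes this dependence.

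On $E_n$ we decompose the regret of $\wh{\blambda}_n$ into three terms:
\begin{align*}
f^* - f(\wh{\blambda}_n; \{\nu_i\}) = \big[f(\blambda^*; \{\nu_i\}) - f(\blambda^*; \{\wh{\nu}_{i,n}\})\big] + \big[f(\blambda^*; \{\wh{\nu}_{i,n}\}) - f(\wh{\blambda}_n; \{\wh{\nu}_{i,n}\})\big] + \big[f(\wh{\blambda}_n; \{\wh{\nu}_{i,n}\}) - f(\wh{\blambda}_n; \{\nu_i\})\big].
\end{align*}
\begin{itemize}
\item \textbf{Middle term.} By Asm.~\ref{asm:simplex}, $\blambda^*\in\wb{\D}_K$, and $\wh{\blambda}_n$ maximizes the empirical objective over $\wb{\D}_K$. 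Hence the middle term is $\le 0$.
\item \textbf{First term.} This evaluates the uniform bound at the point $\blambda^*\in\wb{\D}_K$, so it is at most $\epsilon_n$.
\item \textbf{Third term.} This evaluates the uniform bound at the point $\wh{\blambda}_n\in\wb{\D}_K$, so it is also at most $\epsilon_n$.
\end{itemize}
Summing gives $f^* - f(\wh{\blambda}_n;\{\nu_i\}) \le 2\epsilon_n$, which is exactly the claimed bound with probability $1-\delta$.

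The main obstacle is the third term, since it is the one where the random allocation appears. The work there is already done by Cor.~\ref{cor:error.distro.ext}. The only thing to check is that we use the corollary's $\delta_n$ (with the extra factor $n$ absorbing the cover), which matches the statement.

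A secondary check concerns the constants. The $\epsilon_n$ form merges the mean and standard-deviation confidence widths of Prop.~\ref{prop:confidence.intervals} through $1/\sqrt{\lambda_{\min}}\ge 1$ and $w,1-w\le1$. This merging is inherited from the corollary, so no new computation is needed.
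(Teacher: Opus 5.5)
Your proof is correct and matches the paper's argument: you use the same three-term decomposition, drop the middle term because $\wh{\blambda}_n$ maximizes the empirical objective over $\wb{\D}_K$, and bound the other two terms by the uniform deviation of Cor.~\ref{cor:error.distro.ext}. You also invoke Asm.~\ref{asm:simplex} explicitly to place $\blambda^*$ in $\wb{\D}_K$, which is needed both for the middle term and for evaluating the uniform bound at $\blambda^*$; the paper uses this only implicitly.
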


\begin{proof}[Proof of Lemma~\ref{lem:final.error}]
The statement follows from the series of inequalities
\begin{align*}
f^* - &f(\wh{\blambda}_{i,n}; \{\nu_i\}) \\
&= f(\blambda^*; \{\nu_i\}) -  f(\blambda^*; \{\wh{\nu}_{i}\}) +  f(\blambda^*; \{\wh{\nu}_i\}) -f(\wh{\blambda}_{i,n}; \{\wh{\nu}_i\}) + f(\wh{\blambda}_{i,n}; \{\wh{\nu}_i\}) - f(\wh{\blambda}_{i,n}; \{\nu_i\})\\
&\leq 2\sup_{\blambda\in\wb{\D}_K} \big|f(\blambda; \{\nu_i\}) -  f(\blambda; \{\wh{\nu}_{i}\})\big|,
\end{align*}
where the difference between the third and fourth term is upper-bounded by $0$ since $\wh{\blambda}_{i,n}$ is the optimizer of $f(\cdot; \{\wh{\nu}_i\})$. The final statement follows from Corollary~\ref{cor:error.distro.ext}.
\end{proof}

\begin{lemma}\label{lem:exploration}
Let consider a function $h(n) = o(n)$ monotonically increasing with $n$. If the forcing condition is
\begin{align}\label{eq:forcing}
T_{i,n} < h(n) + 1,
\end{align}
then for any $n \geq n_0$
\begin{align}\label{eq:lb.pulls}
T_{i,n} \geq h(n),
\end{align}
with $n_0 = \min\{n: \exists \rho\in\mathbb{N}, n=\rho K+1, \text{ s.t. } \rho \geq h(\rho K) + 1\}$ corresponding to the end of the uniform exploration phase.
\end{lemma}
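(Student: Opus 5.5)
The plan follows the GAFS-MAX argument of \citet{antos2010active}. It splits into two regimes: an initial phase where every pull comes from the forcing rule and arms are pulled in round-robin, and a later phase where we argue by contradiction that no arm can fall below $h(n)$.

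\textbf{Step 1 (uniform phase).} While every arm satisfies \eqref{eq:forcing}, the algorithm pulls $U_t=\arg\min_i T_{i,t}$. With ties broken consistently, this is round-robin, so at times $n=\rho K+1$ every arm has exactly $\rho$ pulls. By the definition of $n_0$, at $n_0=\rho_0K+1$ we have $T_{i,n_0}=\rho_0\geq h(\rho_0K)+1$. Since $h$ is monotone and $n_0-1=\rho_0 K$, this gives $T_{i,n_0}\ge h(n_0-1)+1$. To reach $T_{i,n_0}\geq h(n_0)$, I would use that $h$ grows sublinearly, and in the intended case $h(n)=\eta\sqrt n$ the increment $h(n_0)-h(n_0-1)\le 1$; I would add this increment condition as a mild regularity assumption if needed. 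Such a $\rho_0$ exists because $h(\rho K)=o(\rho K)$.

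\textbf{Step 2 (induction for $n\ge n_0$).} Suppose the claim holds at time $n\ge n_0$, i.e.\ $\min_i T_{i,n}\ge h(n)$, and let arm $i$ fail at $n+1$, i.e.\ $T_{i,n+1}<h(n+1)$.
\begin{itemize}
\item If $T_{i,n}\ge h(n)$ and $i$ is not pulled at step $n$, then $h(n)\le T_{i,n+1}<h(n+1)$.
\item The forcing condition at step $n$ reads $T_{U_n,n}<h(n)+1$. Since $T_{i,n}<h(n+1)\le h(n)+1$ (using the unit-increment property), arm $i$ triggers forcing, so $T_{U_n,n}\le T_{i,n}<h(n)+1$. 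The forcing branch is therefore active at step $n$ and pulls the least-pulled arm.
\end{itemize}
The difficulty is ties and several arms being below threshold simultaneously. I would handle this by tracking the minimal count $m_n=\min_i T_{i,n}$. Whenever $m_n<h(n)+1$, the algorithm pulls an arm achieving the minimum. Hence $m$ increases by one after at most $K$ such consecutive steps, while $h$ increases by at most $K$ times its per-step increment over that window.

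\textbf{Step 3 (closing the argument, main obstacle).} Combine the two rates. Forcing raises $m_n$ at rate $1/K$ per step (round-robin among arms at the minimum). Meanwhile $h$ grows with slope $h'(n)$, and $h'(n)\to 0$ because $h=o(n)$; for $\eta\sqrt n$ the slope is $\eta/(2\sqrt n)$. At $n_0$ the slack is $m_{n_0}-h(n_0)\ge 0$, and from then on forcing keeps $m_n$ above $h(n)$ as long as $Kh'(n)\le 1$. The condition $\rho_0\ge h(\rho_0K)+1$ defining $n_0$ is essentially what ensures we are past the point where $h$ outpaces round-robin. Making this rigorous for general monotone $o(n)$ $h$ is the delicate part. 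I would first prove it for concave $h$ with $h(n+1)-h(n)\le 1/K$ for $n\ge n_0$ (true for $\eta\sqrt n$ in the relevant range via the definition of $n_0$). Then I would state the lemma's $h$ under that implicit regularity, since that is how it is used in Thm.~\ref{thm:regret}.
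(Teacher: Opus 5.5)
\paragraph{Verdict.} Your mechanism is right, but there is a genuine gap in Step~3, which is where the proof actually lives.

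\paragraph{What is right, and where it breaks.} Forcing acts as round-robin on the currently least-pulled arms, so $m_n$ rises by one within at most $K$ forced steps. You are also right that an increment condition on $h$ is needed, not merely convenient. Monotone $h=o(n)$ does not exclude isolated jumps. A jump $h(n+1)\ge h(n)+3$ at a time when tracking has been starving an arm, whose count is then at most $\lceil h(n)\rceil+1$, already violates the conclusion. The paper's proof relies on the same unstated regularity: its step (c) needs $h(n+1)<h(n)+1$, and the induction only goes through ``as soon as $\Delta_{\wt n_l}>K$''.

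The problem is the invariant you propagate from $n_0$: slack $m_n-h(n)\ge 0$ together with $h(n+1)-h(n)\le 1/K$. It is not inductive. If all $K$ arms sit at count $m=h(t)$, forcing pulls one of them. At $t+1$ the other $K-1$ arms still have count $m<h(t+1)=m+1/K$. The reason is that $m_n$ moves only in integer jumps, and only after every tied minimal arm has been served. So the rate comparison ``$+1$ per at most $K$ steps versus at most $1/K$ per step'' says nothing about the intermediate times unless you carry a slack of order $1-1/K$ into each forcing streak. You actually had that slack in Step~1 ($T_{i,n_0}\ge h(n_0-1)+1$), but you discarded it by weakening it to $T_{i,n_0}\ge h(n_0)$.

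\paragraph{A fix along your lines.} Let $c_n$ be the number of arms at the minimum count and set $\phi_n=m_n+(K-c_n)/K$. A pull of a minimal arm raises $\phi$ by at least $1/K$, and any other pull leaves it unchanged. Since $c_n\ge1$, you have $m_n\ge \phi_n-(K-1)/K$, so it suffices to prove $\phi_n\ge h(n)+1-1/K$. This is inductive once $h(n+1)-h(n)\le 1/K$ for $n\ge n_0-1$.
\begin{itemize}
\item On a forced step, $\phi_{n+1}\ge\phi_n+1/K\ge h(n)+1\ge h(n+1)+1-1/K$.
\item On a non-forced step, $\phi_{n+1}\ge\phi_n\ge m_n\ge h(n)+1\ge h(n+1)+1-1/K$.
\item The base case is $\phi_{n_0}=\rho_0\ge h(n_0-1)+1\ge h(n_0)+1-1/K$.
\end{itemize}
This is exactly where the ``$+1$'' in the forcing threshold and in the definition of $n_0$ is used. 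Concavity is unnecessary. For $h(n)=\eta\sqrt n$, the inequality $\rho_0\ge\eta\sqrt{\rho_0K}+1$ gives $n_0-1>\eta^2K^2$, hence increments at most $\eta/(2\sqrt n)<1/(2K)$.

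\paragraph{Comparison with the paper.} The paper avoids this bookkeeping by cutting time where $h$ crosses an integer, rather than where $m_n$ does. Within such a phase the target $L=\lceil h\rceil$ is fixed, every arm at count $L$ meets the forcing condition and is lifted within at most $K$ steps. The requirement then becomes that each phase lasts more than $K$ steps.
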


\begin{proof}
The proof of this lemma generalizes Lemma~11 of~\citet{antos2010active}.

\noindent \textbf{Step 1.}
We consider a step $n$ such that (\ref{eq:lb.pulls}) holds. We recall that since $T_{i,n}$ is an integer, then $T_{i,n} \geq \lceil h(n) \rceil$. We define $\Delta(n)$ as the largest number of steps after $n$ in which (\ref{eq:lb.pulls}) still holds, that is
\begin{align*}
\Delta_n = \max\big\{ \Delta\in\mathbb{N}: T_{i,n+\Delta_n} \geq T_{i,n} \geq \lceil h(n) \rceil \geq h(n+\Delta_n) \big\}.
\end{align*}
If for $n-1$ we have $\lceil h(n-1) \rceil = h(n-1)$, then $\Delta_{n-1} = 0$, since $h(n-1) < h(n)$ by definition of $h(n)$ and we say that $n$ is a \textit{reset} step. We use $\wt{n}_l$ with $l\in\mathbb{N}$ to denote the sequence of all reset steps. We define the $l$-th \textit{phase} as $\calP_l = \{\wt{n}_l, \ldots, \wt{n}_l+\Delta_{\wt{n}_l}\}$ and we notice that if there exists a step $n'\in\calP_l$ such that $T_{i,n'}$ satisfies (\ref{eq:lb.pulls}), then for any other step $n''\in\calP_l$ we have
\begin{align*}
T_{i,n''} \geq T_{i,n'} \stackrel{(a)}{\geq} \lceil h(n') \rceil \stackrel{(b)}{\geq} \lceil h(\wt{n}_l) \rceil \stackrel{(c)}{\geq} h(\wt{n}_l+\Delta_{\wt{n}_l}) \stackrel{(d)}{\geq} h(n'')
\end{align*}
where $(a)$ follows from (\ref{eq:lb.pulls}), $(b)$ holds since $n'\geq n$, $(c)$ by definition of $\Delta_n$, and $(d)$ by the fact that $h(n)$ is monotonically increasing in $n$ and $n+\Delta_n \geq n''$. Finally, we also notice that $\Delta_{\wt{n}_l}$ is an non-decreasing function of~$l$ and thus $\calP_l$ becomes longer and longer over time. At this point, we have that (\ref{eq:lb.pulls}) is consistent within each phase $\calP_l$ and thus we need to show that the forcing exploration guarantees that the condition is also preserved across phases.

\textbf{Step 2.}
We study the initial phase of the algorithm. The forcing condition determines a first phase in which all arms are explored uniformly in an arbitrary (but fixed) order (for sake of simplicity let us consider the natural ordering $\{1,\ldots,K\}$. Let $n = \rho K$ for some $\rho \in \mathbb{N}$ during the uniform exploration phase, then at the beginning of step $n$ arm $K$ is pulled and at the end of the step all arms have $T_{i,n+1} = \rho$ samples. The end of the exploration phase corresponds to the smallest value of $\rho$ so that step $n=\rho K + 1$ is such that $T_{i,n} = \rho \geq h(n) + 1 = h(\rho K) + 1$, so that the forcing condition is not triggered any more. We also notice $T_{i,n} \geq h(\rho K)$ satisfies (\ref{eq:lb.pulls}) and that $n = \rho K +1$ is a reset step (i.e., $\lceil n-1\rceil = \rho K$) and thus we denote by $\wt{n}_1 = \rho K +1$ the beginning of the first phase $\calP_1$ and by step 1, we obtain that for all $n'\in\calP_1$, $T_{i,n} \geq h(\rho K + 1+\Delta_{\rho K})$ (i.e., (\ref{eq:lb.pulls}) keeps holding). This is the base for induction.

\textbf{Step 3.}
We assume that (\ref{eq:lb.pulls}) holds for a step $\wh{n}_l$ at the beginning of phase $\calP_l$ for all arms, then by step 1, (\ref{eq:lb.pulls}) also holds for any other step until $n+\Delta_n$ independently on whether the arms are pulled or not. We study what happens at the beginning of the successive phase starting at $n+\Delta_n+1$. We first consider all arms $i$ for which $T_{i,n} = \lceil h(n) \rceil$, then we have $T_{i,n} < h(n) +1$, which implies that the forcing exploration is triggered on this arm. Since there are potentially $K$ arms in this situation, it may take as long as $K$ steps before updating them all. Thus, if $\Delta_n > K$, then
\begin{align*}
T_{i,\wt{n}_l+\Delta_{\wt{n}_l}+1} = T_{i,\wt{n}_{l+1}} \geq T_{i,\wt{n}_l+K} \stackrel{(a)}{\geq} \lceil h(\wt{n}_l) \rceil + 1 \stackrel{(b)}{\geq} h(\wt{n}_l+\Delta_{\wt{n}_l}) + 1 \stackrel{(c)}{>} h(\wt{n}_l+\Delta_{\wt{n}_l}+1),
\end{align*}
where in $(a)$ we use the fact that $i$ is forced to be pulled, $(b)$ follows from the definition of $\Delta_n$, and $(c)$ is by the sub-linearity of $h(n)$. Then we focus on the arms for which $T_{i,n} \geq \lceil h_n \rceil + 1$. Since $\lceil h_{\wt{n}_l} \rceil + 1 < h_{\wt{n}_l} +1$ the forcing condition is not met (at least at the beginning). We have that even if during in phase $\calP_l$ these arms are never pulled, we have
\begin{align*}
T_{i,\wt{n}_l+\Delta_{\wt{n}_l}+1} = T_{i,\wt{n}_{l+1}} \geq T_{i,\wt{n}_l} \geq \lceil h(\wt{n}_l) \rceil + 1 > h(\wt{n}_l+\Delta_{\wt{n}_l}+1),
\end{align*}
where the arguments are as above. This concludes the inductive step showing that if (\ref{eq:lb.pulls}) holds in a phase $\calP_l$ then it holds at $\calP_{l+1}$ as well as soon as $\Delta_{\wt{n}_l} > K$. As a result, step 2, together with the condition on $n$ for the end of the exploration phase, and step 3 prove the statement.
\end{proof}

\begin{corollary}\label{cor:exploration}
If $h(n) = \eta\sqrt{n}$, then for any $n\geq n_0 = K(K\eta^2 + \eta\sqrt{K} + 1)$ and all arms $T_{i,n} \geq \eta\sqrt{n}$.
\end{corollary}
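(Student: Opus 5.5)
The plan is to instantiate Lemma~\ref{lem:exploration} with $h(n)=\eta\sqrt{n}$ and then bound the index $n_0$ it produces by the explicit quantity $K(K\eta^2+\eta\sqrt{K}+1)$.

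\textbf{Matching the forcing rule.} First I check the hypotheses and the form of the forcing rule. The function $h(n)=\eta\sqrt{n}$ is monotonically increasing and $o(n)$, as the lemma requires. The algorithm forces when $T_{U_t,t}<\eta\sqrt{t}$, which is the condition $T_{i,n}<h(n)$ rather than the lemma's $T_{i,n}<h(n)+1$. I would handle this by applying Lemma~\ref{lem:exploration} with $h'(n)=\eta\sqrt{n}-1$. Alternatively, one can note that the argument of the lemma goes through verbatim with either threshold: forcing only pulls more often, so the lower bound on $T_{i,n}$ is preserved. I will carry out the computation with the lemma's form and check at the end that the additive $+1$ is absorbed by the slack in the claimed $n_0$.

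\textbf{Bounding $n_0$.} By Lemma~\ref{lem:exploration}, $n_0$ is $\rho K+1$ for the smallest integer $\rho$ with
\[
\rho \geq \eta\sqrt{\rho K}+1 .
\]
Setting $x=\sqrt{\rho}$, this is the quadratic inequality $x^2-\eta\sqrt{K}\,x-1\geq 0$. It holds as soon as
\[
x\geq \frac{\eta\sqrt{K}+\sqrt{\eta^2K+4}}{2}.
\]
Squaring, a sufficient condition is
\[
\rho\geq \frac{2\eta^2K+4+2\eta\sqrt{K}\sqrt{\eta^2K+4}}{4}.
\]
Using $\sqrt{\eta^2K+4}\leq \eta\sqrt{K}+2$, the right-hand side is at most $\eta^2K+\eta\sqrt{K}+1$. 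So the smallest valid integer satisfies
\[
\rho^\star\leq \lceil \eta^2 K+\eta\sqrt K+1\rceil ,
\]
and therefore
\[
n_0=\rho^\star K+1\lesssim K(K\eta^2+\eta\sqrt{K}+1).
\]

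\textbf{Main obstacle.} The hard part is bookkeeping the ceiling and the extra $+1$ so that the bound is exactly $K(K\eta^2+\eta\sqrt{K}+1)$ and not something slightly larger. There is slack, since $\sqrt{\eta^2K+4}\le \eta\sqrt K+2$ is loose, and I would spend that slack to absorb the ceiling and the $+1$. If this does not quite close, I would rerun the uniform-phase argument directly. Every $n$ steps in the round-robin phase all arms have $\lfloor (n-1)/K\rfloor$ pulls. One then checks that $(n-1)/K-1\geq\eta\sqrt n$ for all $n\geq K(K\eta^2+\eta\sqrt K+1)$, which is the same quadratic in $\sqrt n$. This gives both that the uniform phase has ended and, via Steps 1--3 of Lemma~\ref{lem:exploration}, that $T_{i,n}\geq\eta\sqrt n$ is maintained for all later $n$.
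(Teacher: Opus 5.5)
Your main line is the paper's own proof. You instantiate Lemma~\ref{lem:exploration} with $h(n)=\eta\sqrt n$, solve $\rho\ge\eta\sqrt{\rho K}+1$ as a quadratic in $\sqrt\rho$, and bound the root by $K\eta^2+\eta\sqrt K+1$. The paper then just writes ``which gives $n_0=K(K\eta^2+\eta\sqrt K+1)$'', silently dropping the ceiling and the $+1$ in $n_0=\rho K+1$. Your worry is therefore legitimate, but the way you propose to settle it does not work as written.

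First, the threshold mismatch. The algorithm's threshold $\eta\sqrt t$ lies \emph{below} the lemma's $\eta\sqrt t+1$, so the algorithm forces less often, not more. Under the literal rule $T_{U_t,t}<\eta\sqrt t$, the conclusion $T_{i,t}\ge\eta\sqrt t$ fails at every step where forcing fires. No enlargement of $n_0$ can absorb that $+1$. Applying the lemma with $h'(n)=\eta\sqrt n-1$ proves only the weaker $T_{i,n}\ge\eta\sqrt n-1$. The corollary has to be read in the lemma's setting, with threshold $\eta\sqrt n+1$, which is also what the proof of Lemma~\ref{lem:tracking} uses. In that setting nothing needs absorbing.

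Second, the exact constant. Let $a=\eta\sqrt K$ and $x_+=\tfrac12(\eta\sqrt K+\sqrt{\eta^2K+4})$. The slack you want to spend is
$K\eta^2+\eta\sqrt K+1-x_+^2=\frac{2a^2}{a+2+\sqrt{a^2+4}}$.
This tends to $0$ as $a\to0$, so it cannot pay for the ceiling in general. For $\eta=0.3$, $K=2$, the lemma gives $\rho^\star=2$, i.e.\ $n_0=5$, whereas the formula gives $n_0\approx3.21$.

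Your fallback is the right repair, but the bound $\lfloor (n-1)/K\rfloor\ge (n-1)/K-1$ is too lossy. At $\eta=0.3$, $K=2$, $n=4$ it gives $0.5<0.6=\eta\sqrt n$, even though the true minimum count there is $1$. Use $\lfloor (n-1)/K\rfloor\ge (n-K)/K$ instead. Writing $n=K\rho$, the requirement $(n-K)/K\ge\eta\sqrt n$ is exactly $\rho\ge\eta\sqrt{K\rho}+1$ for real $\rho$. Your quadratic bound then gives it for every $n\ge K(K\eta^2+\eta\sqrt K+1)$, with no ceiling to pay. This covers the round-robin steps between that $n_0$ and the end of uniform exploration. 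After that point Lemma~\ref{lem:exploration} takes over, and the corollary holds with exactly the stated $n_0$.
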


\begin{proof}
We just need to derive the length of the exploration phase $n_0 = \min\{n: \exists \rho\in\mathbb{N}, n=\rho K+1, \text{ s.t. } \rho \geq h(\rho K) + 1\}$
\begin{align*}
\rho \geq \eta \sqrt{\rho K} + 1.
\end{align*}
Solving for $\rho$ and upper-bounding the condition gives $\rho \geq \eta^2 K + \eta\sqrt{K} + 1$, which gives $n_0 =K(K\eta^2 + \eta\sqrt{K} + 1)$.
\end{proof}

\begin{lemma}\label{lem:tracking}
We assume that there exists a value $n_1$ after which $\wh{\blambda}_n$ is constantly a good approximation of $\blambda^*$, i.e., there exists a monotonically  decreasing function $\omega(n)$ such that for any step $n \geq n_1$
\begin{align*}
\max_{i=1,\ldots,K} \big|\wh{\lambda}_{i,n} - \lambda_i^* \big| \leq \omega(n).
\end{align*}
Furthermore, we assume that for any $n\geq n_1$ the following condition holds
\begin{align}\label{eq:condition.omega}
2n\omega(n) \geq \eta\sqrt{n} + 1.
\end{align}
Then for any arm $i$
\begin{align*}
-(K-1) \max\bigg\{ \frac{n_1}{n}, 2\omega(n)+1\bigg\} \leq \wt{\lambda}_{i,n} - \lambda_i^* \leq \max\bigg\{ \frac{n_1}{n}(1-\lambda_i), 2\omega(n)+1\bigg\}.
\end{align*}
\end{lemma}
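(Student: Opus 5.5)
Our plan follows the tracking argument of Lemma~12 in \citet{antos2010active}. The strategy is to prove the upper bound on $\wt{\lambda}_{i,n}-\lambda^*_i$ first, for every arm. We then derive the lower bound from the upper bounds on the other $K-1$ arms, using $\sum_i \wt{\lambda}_{i,n} = \sum_i \lambda^*_i = 1$. Concretely, once we know $\wt{\lambda}_{j,n}-\lambda^*_j \le \max\{\frac{n_1}{n}(1-\lambda_j), 2\omega(n)+1\}$ for all $j$, we can write
\begin{align*}
\wt{\lambda}_{i,n}-\lambda^*_i = -\sum_{j\neq i}(\wt{\lambda}_{j,n}-\lambda^*_j) \ge -(K-1)\max\Big\{\frac{n_1}{n}, 2\omega(n)+1\Big\},
\end{align*}
where we used $1-\lambda_j\le 1$. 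So the whole work is the upper bound.

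For the upper bound, fix an arm $i$ and a step $n\ge n_1$, and consider two cases.

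\textbf{Case 1: arm $i$ is never pulled on $[n_1,n]$.} Then $T_{i,n}\le T_{i,n_1}\le n_1$. Hence $\wt{\lambda}_{i,n}-\lambda^*_i \le \frac{n_1}{n}-\lambda^*_i$, which we bound by the first term $\frac{n_1}{n}(1-\lambda_i)$. Here the $\lambda_i$ inside that term should be read as the appropriate reference proportion; the exact form comes from writing $T_{i,n_1}\le n_1$ and $n\lambda^*_i\ge n_1\lambda^*_i$.

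\textbf{Case 2: arm $i$ is pulled at some step in $[n_1,n]$.} Let $t\le n$ be the last step at which arm $i$ is pulled, so that $T_{i,n}=T_{i,t}+1$. There are two sub-cases, according to why arm $i$ was selected at step $t$.
\begin{enumerate}
\item \emph{Forcing.} Then $T_{i,t}<\eta\sqrt{t}$, so $T_{i,n}\le \eta\sqrt{t}+1\le 2t\omega(t)$ by condition \eqref{eq:condition.omega}. Since $n\omega(n)$ is not decreasing too fast, this gives $\wt{\lambda}_{i,n}\le 2\omega(t)t/n$. From this we aim for a bound of order $2\omega(n)+$ constant.
\item \emph{Tracking.} Arm $i$ maximizes $\wh{\lambda}_{j,t}-\wt{\lambda}_{j,t}$. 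Because both $\wh{\blambda}_t$ and $\wt{\blambda}_t$ sum to one, this maximum is nonnegative, so $\wt{\lambda}_{i,t}\le \wh{\lambda}_{i,t}\le \lambda^*_i+\omega(t)$ by the assumption on $\wh{\blambda}$. Then
\begin{align*}
T_{i,n}=T_{i,t}+1\le t(\lambda^*_i+\omega(t))+1 .
\end{align*}
Dividing by $n\ge t$ gives $\wt{\lambda}_{i,n}-\lambda^*_i\le \omega(t)t/n+1/n$.
\end{enumerate}
In both sub-cases the result falls below $2\omega(n)+1$. The additive $+1$ in the statement is very loose, and it absorbs the $1/n$ term and any mismatch between $\omega(t)$ and $\omega(n)$.

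The main obstacle we expect is converting $\omega(t)$ at the last pull time $t$ into $\omega(n)$, since $\omega$ is decreasing and $\omega(t)\ge\omega(n)$. We would first try to exploit the factor $t/n$: the quantities to control are $t\omega(t)/n$, and if $t\omega(t)$ is nondecreasing (as for $\omega\propto t^{-1/2}$ or $t^{-1/4}$) this is at most $\omega(n)$. If such monotonicity is not available, we fall back on the crude bound $\omega\le 1$, which still lands inside the generous constant $2\omega(n)+1$ in the statement. Condition \eqref{eq:condition.omega} is used only to make the forcing sub-case dominated by the tracking-type bound.
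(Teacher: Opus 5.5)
Your proof is correct, but it gets from ``arm $i$ is only selected when it is under-allocated'' to a bound at time $n$ by a different mechanism than the paper.

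The paper works with the recursion $\eps_{i,n+1}=\frac{n}{n+1}\eps_{i,n}+\frac{\mathbb{I}\{I_n=i\}-\lambda^*_i}{n+1}$. It shows that $I_n=i$ implies $\eps_{i,n}\le 2\omega(n)$: in the forcing case via $2n\omega(n)\ge\eta\sqrt{n}+1$, in the tracking case via a min/max manipulation. It then dominates $E_{i,n}=n\eps_{i,n}$ by an auxiliary process $\wt{E}_{i,n}$ using the comparison Lemma~5 of \citet{antos2010active}, and asserts $\wt{E}_{i,n}\le\max\{E_{i,n_1},2n\omega(n)+1\}$.

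You instead take the last pull $t$ of arm $i$ before $n$ and use $T_{i,n}=T_{i,t}+1$ with $t\le n$. This is more elementary and self-contained. Your tracking case is also slightly sharper: noting directly that $\max_j(\wh{\lambda}_{j,t}-\wt{\lambda}_{j,t})\ge 0$ gives $\wt{\lambda}_{i,t}\le\lambda^*_i+\omega(t)$ rather than $\lambda^*_i+2\omega$. The lower bound, obtained from the zero sum of the errors, is the same in both proofs.

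Both routes need the same extra ingredient to replace $\omega(t)$ by $\omega(n)$: that $t\omega(t)$ is nondecreasing. This holds in the application, where $\omega(n)$ scales as $n^{-1/8}$ up to logarithms. The paper uses it silently in the ``easy to see'' bound on $\wt{E}_{i,n}$; you flag it explicitly.

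One fix: your fallback ``$\omega\le 1$'' is not a hypothesis of the lemma. The right safety net is $\wt{\lambda}_{i,n}-\lambda^*_i\le\wt{\lambda}_{i,n}\le 1\le 2\omega(n)+1$. This also shows that the statement as written, with the additive $+1$, is trivial; for the lower bound, $\wt{\lambda}_{i,n}-\lambda^*_i\ge -1\ge-(K-1)$. The meaningful content is the version with $1/n$ in place of $1$, which is what both arguments actually prove under the monotonicity of $t\omega(t)$.
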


\begin{proof}
This lemma follows from similar arguments as Lemma~4 by~\citet{antos2010active}. Nonetheless, given the use of a slightly different tracking rule, we provide the full proof here.

We study the error $\eps_{i,n} = \wt{\lambda}_{i,n} - \lambda_i^*$. Since $T_{i,n+1} = T_{i,n} + \mathbb{I}\{I_n = i\}$, we have
\begin{align*}
\eps_{i,n+1} &= \frac{T_{i,n} + \mathbb{I}\{I_n = i\}}{n+1} - \frac{n+1}{n+1}\lambda^*_i\\
&= \frac{n}{n+1}\bigg( \frac{T_{i,n}}{n} - \lambda^*_i\bigg) + \frac{\mathbb{I}\{I_n = i\} - \lambda_i^*}{n+1}\\
&= \frac{n}{n+1}\eps_{i,n} + \frac{\mathbb{I}\{I_n = i\} - \lambda_i^*}{n+1}\cdot
\end{align*}
Then we need to study the arm selection rule at step $n$ to understand the evolution of the error and its relationship with the error of $\wh{\blambda}$. We have
\begin{align*}
\mathbb{I}\{I_n = i\} \leq \mathbb{I}\{T_{i,n} < \eta\sqrt{n}+1 \text{ or } i= \arg\min_j (\wt{\lambda}_{i,n} - \wh{\lambda}_{i,n})\}.
\end{align*}
We study the tracking condition. Let $i= \arg\min_j (\wt{\lambda}_{j,n} - \wh{\lambda}_{j,n})$ then
\begin{align*}
\wt{\lambda}_{i,n} &= \wh{\lambda}_{i,n} + \min_j \big(\wt{\lambda}_{j,n} - \wh{\lambda}_{j,n}\big) \\
&= \lambda_i^* + \wh{\lambda}_{i,n} - \lambda_i^* + \min_j \big(\wt{\lambda}_{j,n} - \lambda_j^* + \lambda_j^* - \wh{\lambda}_{j,n}\big) \\
&\leq \lambda_i^* + \wh{\lambda}_{i,n} - \lambda_i^* + \min_j \big(\wt{\lambda}_{j,n} - \lambda_j^*\big) + \max_j\big(\lambda_j^* - \wh{\lambda}_{j,n}\big) \\
&\leq \lambda_i^* + 2\max_j \big|\lambda_j^* - \wh{\lambda}_{j,n}\big|\\
&\leq \lambda_i^* + 2 \omega(n),
\end{align*}
where we use the fact that since $\sum_j \big(\wt{\lambda}_{j,n} - \lambda_j^*\big) = 0$ and all proportions are bigger than zero, then the minimum over $\big(\wt{\lambda}_{j,n} - \lambda_j^*\big)$ is nonpositive. We now study the forcing condition $T_{i,n} = n \wt{\lambda}_{i,n} < \eta\sqrt{n} + 1$. For $n \geq n_1$, we have
\begin{align*}
\wt{\lambda}_{i,n} - \lambda_i^* \leq \frac{\eta}{\sqrt{n}} + \frac{1}{n} \leq 2 \omega(n),
\end{align*}
where the last step follows from the properties of $\omega$ defined in the statement. Then we can simplify the condition under which arm $i$ is selected as
\begin{align*}
\mathbb{I}\{I_n = i\} \leq \mathbb{I}\{\eps_{i,n} \leq 2\omega(n)\}.
\end{align*}
We proceed by defining $E_{i,n} = n\eps_{i,n}$ and the corresponding process $E_{i,n+1} = E_{i,n} + \mathbb{I}\{I_n = i\} - \lambda_i^*$. We also introduce
\begin{align*}
\wt{E}_{i,n_1} &= E_{i,n_1},\\
\wt{E}_{i,n+1} &= \wt{E}_{i,n} + \mathbb{I}\{\wt{E}_{i,n} \leq 2n\omega(n)\} - \lambda_i^*,
\end{align*}
which follows the same dynamics of $E_{i,n}$ except for the fact that the looser arm selection is considered. From Lemma~5 of~\citet{antos2010active}, we have $E_{i,n} \leq \wt{E}_{i,n}$ for any $n\geq n_1$. It is easy to see that $\wt{E}_{i,n}$ satisfies the following inequality
\begin{align*}
\wt{E}_{i,n} \leq \max\{ E_{k,n_1}, 2n\omega(n)+1\} \leq \max\{ n_1, 2n\omega(n)+1\},
\end{align*}
where the last inequality follows from the fact that $E_{k,n_1} \leq n_1(1-\lambda_i)$. Then, we obtain the upper-bound
\begin{align*}
\eps_{i,n} \leq \max\bigg\{ \frac{n_1}{n}(1-\lambda_i), 2\omega(n)+1\bigg\}.
\end{align*}
From the upper-bound, we obtain a lower bound on $\eps_{i,n}$ as
\begin{align*}
\eps_{i,n} = -\sum_{j\neq i} \eps_{i,n} \geq -(K-1)\max_j \eps_{j,n} \geq -(K-1) \max\bigg\{ \frac{n_1}{n}, 2\omega(n)+1\bigg\},
\end{align*}
which concludes the proof.
\end{proof}


\section{Proof of Theorem~\ref{thm:regret}}\label{s:thm.proof}

In this section, we report the full proof of the main regret theorem, whose complete statement is as follows.

\textbf{Theorem~\ref{thm:regret}}
\textit{
We consider a MAB with $K \geq 2$ arms characterized by distributions $\{\nu_i\}$ with mean $\{\mu_i\}$ and variance $\{\var_i\}$. Consider \forcing with a parameter $\eta\leq 21$ and a simplex restricted to $\lambda_{\min}$. Given a tradeoff parameter $w$ and under Asm.~\ref{asm:simplex}, \forcing suffers a regret
\begin{align*}
R_n(\wt{\blambda}) \leq
\begin{cases}
1 & \mbox{if } n \leq n_0 \\
\mathlarger{43 K^{5/2} \frac{\beta}{\alpha} \sqrt{\frac{\log(2/\delta_n)}{\eta\lambda_{\min}}} n^{-1/4}} &\mbox{if } n_0 < n \leq n_2\\
\mathlarger{153 K^{5/2} \frac{\beta}{\alpha} \sqrt{\frac{\log(2/\delta_n)}{\lambda_{\min}\lambda_{\min}^*}}}n^{-1/2} & \mbox{if } n > n_2,
\end{cases}
\end{align*}
with probability $1-\delta$ (where $\delta_n =  \delta/(4Kn(n+1))$) and
\begin{align*}
n_0 = K(K\eta^2 + \eta\sqrt{K} + 1), \enspace \text{ and } \enspace n_2 = \frac{C}{(\lambda_{\min}^*)^8} \frac{1}{\alpha^4\lambda_{\min}^4} K^{10} \log(1/\delta_n)^{2},
\end{align*}
where $C$ is a suitable numerical constant.
}


\textbf{Sketch of the proof.} In the active exploration problem, \citet{antos2010active} rely on the fact that minimizing $\varepsilon(\blambda)$ has a closed-form solution w.r.t.\ the parameters of the problem (i.e., the variance of the arms) and errors in estimating the parameters are directly translated into deviations between $\wh{\blambda}_{i,n}$ and $\blambda^*$. In our case, $\blambda^*$ has no closed-form solution and we need to explicitly ``translate'' errors in estimating $f_w$ into the deviations between allocations and vice versa. Furthermore, \forcing uses a slightly different tracking strategy (see Lemma~\ref{lem:tracking}) and we prove the effect of the forcing exploration for a more general condition (see Lemma~\ref{lem:exploration}). The proof follows the following steps. We first exploit the forcing exploration of the algorithm to guarantee that each arm is pulled at least $\wt{O}(\sqrt{n})$ at each step $n$. Through Prop.~\ref{prop:confidence.intervals}, Lemma~\ref{lem:error.distro}, and Lemma~\ref{lem:error.inversion} we obtain that the allocation $\wh{\blambda}_{n}$ converges to $\blambda^*$ with a rate $\wt{O}(1/n^{1/8})$. We show that the tracking step is executed often enough (w.r.t.\ the forced exploration) and it is efficient enough to propagate the errors of $\wh{\blambda}_n$ to the actual allocation $\wt{\blambda}_{n}$, which also approaches $\blambda^*$ with a rate $\wt{O}(1/n^{1/8})$. Unfortunately, this does not translate in a satisfactory regret bound but it shows that for $n$ \textit{big enough}, $T_{i,n}$ is only a fraction away from the desired number of pulls $n\lambda^*_i$, which provides a more refined lower-bound on $T_{i,n} = \wt{\Omega}(n)$. In this second phase ($n \geq n_2$), the estimates $\wh{\nu}_{i,n}$ are much more accurate (Prop.~\ref{prop:confidence.intervals}) and through Lemma~\ref{lem:error.distro} and Lemma~\ref{lem:error.inversion}, the accuracy of $\wh{\blambda}$ and $\wt{\blambda}$ improves to $\wt{O}(1/n^{1/4})$. At this point, we apply Lemma~\ref{lem:error.allocation.opt} and translate the guarantee on $\wt{\blambda}$ to the final regret bound. While in Prop.~\ref{prop:confidence.intervals} we use simple Chernoff-Heoffding bounds, Bernstein bounds (which consider the impact of the variance on the concentration inequality) could significantly improve the final result. While the asymptotic rate would remain unaffected, we conjecture that this more refined analysis would show that whenever arms have very small variance the regret $R_n$ decreases as $\wt{O}(1/n)$ before converging to the asymptotic rate.

We now proceed with the formal proof. We start with a technical lemma.

\begin{lemma}\label{lem:n1}
If $\eta \leq 21$, then
\begin{align*}
2n\sqrt{\frac{20K}{\alpha\lambda_{\min}}} \bigg(K\frac{\log(1/\delta_n)}{2\eta\sqrt{n}}\bigg)^{1/4} \geq \eta\sqrt{n} + 1,
\end{align*}
for any $n \geq 4$.
\end{lemma}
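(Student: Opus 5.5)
The plan is to reduce the inequality to a comparison of powers of $n$ and $\eta$, lower-bounding every problem-dependent constant on the left-hand side by a numerical constant valid for all admissible problems. Since $\delta_n$ depends on $n$ only through $\log(1/\delta_n)$, which grows with $n$, I first rewrite the left-hand side as
\begin{align*}
2\sqrt{\frac{20K}{\alpha\lambda_{\min}}}\bigg(\frac{K\log(1/\delta_n)}{2\eta}\bigg)^{1/4} n^{7/8}.
\end{align*}
It then suffices to show that this quantity dominates $\eta\sqrt{n}+1 \le (\eta+\tfrac12)\sqrt{n}$, where I used $1\le \sqrt{n}/2$ for $n\ge 4$. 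In other words, the target is
\begin{align*}
2\sqrt{\frac{20K}{\alpha\lambda_{\min}}}\Big(\frac{K\log(1/\delta_n)}{2}\Big)^{1/4} n^{3/8} \;\ge\; \eta^{1/4}\big(\eta+\tfrac12\big).
\end{align*}

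\textbf{Lower-bounding the constants.} Distributions are supported in $[0,1]$, so $\sigma_{\min}\le 1/2$. By Lem.~\ref{lem:strong.concave}, this gives $\alpha=\frac{3(1-w)\sigma_{\min}}{4K}\le \frac{3}{8K}$. Moreover $\lambda_{\min}\le 1/K$, since $\wb{\D}_K$ must be nonempty. Together these yield $\frac{20K}{\alpha\lambda_{\min}}\ge \frac{160K^3}{3}\ge \frac{1280}{3}$ for $K\ge 2$, so the square root is at least about $20.6$. For the logarithm, $\delta_n=\delta/(4Kn(n+1))\le 1/(8\cdot 4\cdot 5)=1/160$ for $K\ge2$, $n\ge4$, $\delta<1$. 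Hence $\log(1/\delta_n)\ge \log 160>5$, and $(K\log(1/\delta_n)/2)^{1/4}\ge 5^{1/4}\approx 1.49$. Finally $n^{3/8}\ge 4^{3/8}\approx 1.68$. Multiplying, the left-hand side of the target is at least roughly $2\cdot 20.6\cdot 1.49\cdot 1.68>100$.

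\textbf{Handling $\eta$.} The right-hand side $\eta^{1/4}(\eta+\frac12)$ is increasing in $\eta>0$, so its maximum over $\eta\le 21$ is attained at $\eta=21$. There it equals $21^{1/4}\cdot 21.5\approx 2.14\cdot 21.5<47$, well below $100$, which concludes the argument.

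The only delicate point is making sure each constant is bounded in the right direction. We need upper bounds on $\alpha$ and $\lambda_{\min}$, which rely on bounded support and on nonemptiness of $\wb{\D}_K$, and a lower bound on $\log(1/\delta_n)$. The generous slack (about a factor $2$) means crude estimates suffice, and the condition $\eta\le 21$ is exactly what keeps the $\eta^{5/4}$ growth of the right-hand side under control.
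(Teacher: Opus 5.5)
Your proof is correct and follows the same strategy as the paper: crude numerical upper bounds on $\alpha\lambda_{\min}$ (via $\lambda_{\min}\le 1/K$ and bounded variances) and a lower bound on $\log(1/\delta_n)$, then a comparison of powers of $n$ and $\eta$ using $n\ge 4$ and $\eta\le 21$. Your execution is tidier than the paper's. The paper uses an $\alpha$ formula that does not match Lemma~\ref{lem:strong.concave}, assumes $\delta<1/2$, and invokes the step $\eta^{5/4}\le\eta^{1/4}$, which fails for $\eta>1$; your monotonicity bound on $\eta^{1/4}(\eta+\tfrac12)$ and your use of $n\ge 4$ in bounding $\delta_n$ avoid all three issues.
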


\begin{proof}
We study the value of $n_1$ to satisfy Eq.~\ref{eq:condition.omega} for $\omega(n)$, which is defined later on in step 5 of the final proof. This corresponds to finding the minimal value of $n\in\mathbb{N}$ such that
\begin{align*}
2n\sqrt{\frac{20K}{\alpha\lambda_{\min}}} \bigg(K\frac{\log(1/\delta_n)}{2\eta\sqrt{n}}\bigg)^{1/4} \geq \eta\sqrt{n} + 1.
\end{align*}
We proceed by successive (often loose) simplifications to the previous expression. Since $K \geq 2$ and $n \geq 1$, we have that $\delta_n \leq \delta/16$. If we choose $\delta < 1/2$, then $\delta_n \leq 1/32$ and $\log(1/\delta_n) > 3$. As a result, we obtain that the previous condition can be written as
\begin{align*}
&2n\sqrt{\frac{40}{\alpha\lambda_{\min}}} \bigg(\frac{3}{\eta\sqrt{n}}\bigg)^{1/4} \geq \eta\sqrt{n} + 1\\
&\Rightarrow 16n \frac{1}{\sqrt{\alpha\lambda_{\min}}} \bigg(\frac{1}{\eta\sqrt{n}}\bigg)^{1/4} \geq \eta\sqrt{n} + 1\\
&\Rightarrow 16n \frac{1}{\sqrt{\alpha\lambda_{\min}}} - \eta^{5/4}n^{5/8} - \eta^{1/4}n^{1/8} \geq 0.
\end{align*}
We further study the first multiplicative term and use the fact that $\lambda_{\min} \leq 1/K \leq 1/2$, $\alpha = 2(1-w)\sigma_{\min}^2/K \leq 1/4$, then
\begin{align*}
&2n\omega(n) \geq \eta\sqrt{n} + 1\\
&\Rightarrow 16n \frac{1}{\sqrt{1/8}} - \eta^{5/4}n^{5/8} - \eta^{1/4}n^{1/8} \geq 0\\
&\Rightarrow 45n - \eta^{5/4}n^{5/8} - \eta^{1/4}n^{1/8} \geq 0.
\end{align*}
Using the condition that $45 \geq \eta^{5/4}$ and the fact that $\eta^{5/4} \leq \eta^{1/4}$, we can further simplify the last term as
\begin{align*}
&2n\omega(n) \geq \eta\sqrt{n} + 1\\
&\Rightarrow n - n^{5/8} - n^{1/8} \geq 0,
\end{align*}
which is satisfied for any $n \geq 4$.
\end{proof}

\begin{proof}[Proof of Theorem~\ref{thm:regret}]

\noindent \textbf{Step 1 (accuracy of empirical estimates).}
In Alg.~\ref{alg:forcing} we explicitly force all arms to be pulled a minimum number of times. In particular, from Lemma~\ref{lem:exploration} we have that for any $n \geq n_0 = K(K\eta^2 + \eta\sqrt{K} + 1)$ then $T_{i,n} \geq \eta\sqrt{n}$. From Proposition~\ref{prop:confidence.intervals}, if $\delta_n = \delta/(4Kn^2(n+1))$, then for any arm $i$ we have\footnote{Here we already use $\delta_n$ and form of the confidence intervals used in Corollary~\ref{cor:error.distro.ext}.}
\begin{align*}
\big| \wh{\mu}_{i,n} - \mu_i \big| &\leq \sqrt{\frac{\log(1/\delta_n)}{2\eta\sqrt{n}}} \\
\big| \wh{\sigma}^2_{i,n} - \sigma^2_i \big| &\leq \sqrt{\frac{2K\log(2/\delta_n)}{2\eta\sqrt{n}}},
\end{align*}
with probability at least $1-\delta$.

\noindent \textbf{Step 2 (accuracy of function estimate).}
Using Corollary~\ref{cor:error.distro.ext} we can bound the error on the function $f$ when using the estimates $\wh{\nu}_{i,n}$ instead of the true parameters $\nu_i$. In fact, for any $\blambda\in\wb{\D}_K$ we have
\begin{align*}
\big|f(\blambda; \{\nu_i\}) - f(\blambda; \wh{\nu}_{i,n})\big| &\leq \sqrt{\frac{2K\log(2/\delta_n)}{\lambda_{\min}\eta\sqrt{n}}}
\end{align*}
\noindent \textbf{Step 3 (performance of estimated optimal allocation).}
We can derive the performance of the allocation $\wh{\blambda}_{n}$ computed on the basis of the estimates obtained after $n$ samples. From Lemma~\ref{lem:final.error} we have
\begin{align*}
\big|f(\blambda^*; \{\nu_i\}) - f(\wh{\blambda}_{n}; \{\nu_i\})\big| \leq 2\sqrt{\frac{2K\log(2/\delta_n)}{\lambda_{\min}\eta\sqrt{n}}}.
\end{align*}
\noindent \textbf{Step 4 (from performance to allocation).}
From Lemma~\ref{lem:error.inversion} we have that a loss in performance in terms of the function $f$ implies the similarity of the estimated allocation to $\blambda^*$. For any arm $i=1,\ldots,K$ we have
\begin{align*}
|\wh{\lambda}_{i,n} - \lambda^*_i| \leq \sqrt{\frac{4}{\alpha}} \bigg(2K\frac{\log(2/\delta_n)}{\lambda_{\min}\eta\sqrt{n}}\bigg)^{1/4}.
\end{align*}
\noindent \textbf{Step 5 (tracking).}
The algorithm is designed so that the actual allocation $\wt{\blambda}_{n}$ (i.e., the fraction of pulls allocated to each arm until step $n$) is \textit{tracking} the optimal estimated allocation $\wh{\blambda}_n$. Since the difference between $\wh{\blambda}_{n}$ and $\blambda^*$ is bounded as above, we can use Lemma~\ref{lem:tracking} to bound the accuracy of $\wt{\blambda}$. We define
\begin{align*}
\omega(n) := \sqrt{\frac{4}{\alpha}} \bigg(2K\frac{\log(2/\delta_n)}{\lambda_{\min}\eta\sqrt{n}}\bigg)^{1/4},
\end{align*}
then from Lemma~\ref{lem:n1} we have that for any $n\geq 4$ (and $\eta \leq 21$), the condition in Eq.~\ref{eq:condition.omega} ($2n\omega(n) \geq \eta\sqrt{n} + 1$) is satisfied and we can apply Lemma~\ref{lem:tracking}. In particular, we have that $n_1 = \max\{5, n_0\} \leq n_0$ guarantees both conditions in the lemma and this implies that for any arm $i=1,\ldots,K$
\begin{align}
\label{eq:yolo} 
|\wt{\lambda}_{i,n} - \lambda^*_i| \leq \eta(n) := (K-1)\max\bigg\{ \frac{n_0}{n}; 2\omega(n) + 1 \bigg\}.
\end{align}
If we stopped at this point, the regret could be bounded using Lemma~\ref{lem:error.allocation.opt} as
\begin{align*}
f(\blambda^*; \{\nu_i\}) - f(\wt{\blambda}_{n}; \{\nu_i\}) \leq 34 K^{5/2} \frac{\beta}{\alpha} \sqrt{\frac{\log(1/\delta_n)}{\eta\lambda_{\min}}} n^{-1/4},
\end{align*}
which is decreasing to zero very slowly.

\noindent \textbf{Step 6 (linear pulls).}
From Eq.~\ref{eq:yolo}, we can than easily derive a much stronger guarantee on the number of pulls allocated to any arm $i$. Let $n_2 = \min\{n\in\mathbb{N}: \eta(n) \leq \min_i\lambda^*_{i}/2\}$, then for any $n \geq n_2$ we have
\begin{align*}
|\wt{\lambda}_{i,n} - \lambda^*_i| \leq \lambda^*_i/2,
\end{align*}
which implies that
\begin{align*}
T_{i,n} \geq n\lambda_{i}^*/2.
\end{align*}
\noindent \textbf{Step 7 (regret bound).}
At this point we can reproduce the steps~1, 2, and~3 using $T_{i,n} \geq n\lambda_i^*/2 \geq n\lambda_{\min}^*/2$ samples and we obtain that for any $n\geq n_2$
\begin{align*}
\big|f(\blambda^*; \{\nu_i\}) - f(\wh{\blambda}_{n}; \{\nu_i\})\big| \leq 4\sqrt{\frac{K\log(2/\delta_n)}{n\lambda_{\min}^*\lambda_{\min}}}\cdot
\end{align*}
Unfortunately this guarantee on the performance of the optimal estimated allocation does not directly translate into a regret bound on $\wt{\blambda}_n$ (i.e., the actual distribution implemented by the algorithm up to step $n$). We first apply the same idea as in step 4 and obtain
\begin{align*}
|\wh{\lambda}_{i,n} - \lambda^*_i| \leq \omega'(n) := \sqrt{\frac{8}{\alpha}} \bigg(K\frac{\log(2/\delta_n)}{2n\lambda_{\min}^*\lambda_{\min}}\bigg)^{1/4}.
\end{align*}
By applying a similar argument as in Lemma~\ref{lem:n1}, we obtain that $n_1 = \max\{4, n_0\} = n_0$ and the tracking argument in step~5 gives us
\begin{align*}
|\wt{\lambda}_{i,n} - \lambda^*_i| \leq \eta'(n) := (K-1)\max\bigg\{ \frac{n_0}{n}; 2\omega'(n) + 1 \bigg\}.
\end{align*}
At this point we just need to apply Lemma~\ref{lem:error.allocation.opt} on the difference between $\wt{\blambda}$ and $\blambda^*$ and obtain
\begin{align*}
f(\blambda^*; \{\nu_i\}) - f(\wt{\blambda}_{n}; \{\nu_i\}) \leq \frac{3\beta}{2} ||\wt{\blambda}_{n} - \blambda^*_i||_2^2 \leq \frac{3\beta K^{2}}{2} \max\bigg\{ \frac{n_0^2}{n^2}; 9(\omega'(n))^2 \bigg\},
\end{align*}
where we used $||\wt{\blambda}_{n} - \blambda^*_i||_2 \leq \sqrt{K} \max_i |\wt{\lambda}_{i,n} - \lambda^*_i|$.
Using the definition of $\omega'(n)$ gives the final statement
\begin{align*}
f(\blambda^*; \{\nu_i\}) - f(\wt{\blambda}_{n}; \{\nu_i\}) \leq 153K^{5/2}\frac{\beta}{\alpha}  \sqrt{\frac{\log(2/\delta_n)}{n\lambda^*_{\min}\lambda_{\min}}}\cdot
\end{align*}

\noindent \textbf{Step 8 (condition on $n$).}
From the definition of $n_2$, we have that $n_2$ is at most a value $n$ such that $\eta(n) \leq \lambda_i^*/2$. We consider the worst case form $\lambda_i^*$, that is $\lambda_{\min}^*$ and we bound separately the two possible terms in the $\max$ in the definition of $\eta(n)$. The first term should satisfy
\begin{align*}
(K-1) \frac{n_0}{n} \leq \frac{\lambda_i^*}{2} \Rightarrow n \geq \frac{2n_0(K-1)}{\lambda_{\min}^*}\cdot
\end{align*}
For the second term we have a much slower decreasing function $\omega(n)$ and the condition is
\begin{align*}
&K\sqrt{\frac{4}{\alpha}} \bigg(2K\frac{\log(2/\delta_n)}{\lambda_{\min}\eta\sqrt{n}}\bigg)^{1/4}\!\!\!\! \leq \frac{\lambda_i^*}{2} \\
&\Rightarrow n^{1/8}  \geq \frac{4}{\lambda_{\min}^*} \frac{K^{5/4}}{\sqrt{\alpha}}  \bigg(\frac{\log(2/\delta_n)}{\lambda_{\min}}\bigg)^{1/4},
\end{align*}
which is clearly more constraining than the first one. Then we define
\begin{align*}
n_2 = \frac{4^8}{(\lambda_{\min}^*)^8} \frac{K^{10}}{\alpha^4} \frac{\log^2(1/\delta_n)}{\lambda_{\min}^2}\cdot
\end{align*}

\end{proof}


\section{Supplementary Results}\label{app:experiments}

\begin{table}
\centering
\begin{tabular}{| c | c | c | c |}
\hline
 Exp.&$\blambda^*$ &$\mu$ & $\sigma^2$\\ \hline
 \hline
1&$0.57$  {\footnotesize (balanced)}  & (1.5,1) & (1,1) \\
2&$0.56$  {\footnotesize (balanced)} & (2,1) & (1,2) \\
3&$0.28$ {\footnotesize (unbalanced)} & (1.1, 1) & (0.1, 2) \\
4&$0.85$  {\footnotesize (unbalanced)}  & (3,1) & (0.1,0.1) \\
\hline
\end{tabular}
\caption{Optimal allocation for a MAB with $K=2$ arms, different values of mean and variance, and $w=0.4$.}
\label{tab:exp.setting.k2}
\end{table}

\textbf{Optimal allocation.}
We consider four different MABs with $K=2$ arms with means and variances reported in Table~\ref{tab:exp.setting.k2} and $w=0.4$ (i.e., slightly more weight to the estimation errors). In the first two settings, we notice that $\blambda^*$ is an almost balanced allocation over the two arms, with a slight preference for arm $1$. In the first case, this is due to the fact that the variance of the arms is exactly the same, which suggests a perfectly even allocation would guarantee equal estimation accuracy of the means. Nonetheless, since arm $1$ has a larger mean, this moves $\blambda^*$ towards it. On the other hand, in the second case both means and variances are unbalanced, but while $\sigma_2^2 > \sigma_1^2$ suggests that arm $2$ should be pulled more (to compensate for the larger variance), $\mu_1 \gg \mu_2$ requires arm $1$ to be pulled much more than arm $1$. As a result, $\blambda^*$ is still balanced. In the third and fourth setting, $\blambda^*$ recommends selecting one arm much more than the other. While in the third setting this is due to a strong unbalance in the variances, in the fourth setting this is induced by the difference in the mean. 

\textbf{Comparison with Naive-UCB.}
Before reporting empirical results on the straightforward UCB-like algorithm (called \naive) illustrated in Sect.~\ref{s:algorithms}, where an upper-bound on the function $f_w$ is constructed at each step, we first provide a preliminary example. Consider the case (very extreme for illustrative purposes) $w=0$, $\sigma_1=1$, $\sigma_2=2$, for which $\blambda*\approx[0.38 , 0.62]$. Assume that after pulling each arm twice, we have $\wh{\sigma}_1=2, \wh{\sigma}_2=0.1$. Using \naive, the estimated optimal allocation would be very close to $[1 0]$ (i.e., only arm 1 is pulled). Then \naive keeps selecting arm 1, while arm2, whose lower-bound on the variance remains very small ($1/\sqrt{T_2}$ is large), is almost never pulled, thus preventing the estimate from converging and the algorithm to have a small regret. This shows the algorithm has a constant regret with a fixed probability. 

In Fig.~\ref{fig:ucb.naive} we report the rescaled regret $\wt{R}_n = \sqrt{n} R_n$ for both \forcing and \naive. More in detail, we define an upper-bound of $f_w$ as in Eq.~\ref{eq:naive.ucb}, but we threshold the lower bounds on the variance to a constant (0.01 in the experiments). Then we compute $\wh{\blambda}_n$ as the optimal allocation of $f_w^{UB}$ and the same tracking arm selection as in \forcing is used. On the other hand, \naive does not use any forced exploration, while it relies on the optimism-of-uncertainty principle to sufficiently explore all arms. The comparison is reported for settings 2 and 3 of Table~\ref{tab:exp.setting.k2}.

For both settings, \forcing performs as well as expected and its rescaled regret eventually converges to a constant (in this case the constant is very small since we have only two arms). On the other hand, \naive achieves very contrasting results. In setting 2 (Fig.~\ref{fig:ucb.naive}-\textit{left}), in a first phase \naive suffers a regret with a slower rate than $\wt{O}\left(1/\sqrt{n}\right)$ since the rescaled regret is increasing. While in \forcing this phase is limited to the initial exploration of all arms, in \naive this is due to the fact that the algorithm is underestimating the variances of the arms and it tends to be more aggressive in selecting arms with larger mean (arm $1$ in this case), which corresponds to very limited exploration to the other arm. The only residual sources of exploration are triggered by the fact that lower bound are capped to a small constant (when negative), which encourages partial exploration, and upper-confidence bounds on the means, which induce a \ucb-like strategy where the two arms are explored to identify the best. As a result, there is a long phase of poor performance, until enough exploration is achieved to have estimates which allow an accurate estimate of $\blambda^*$ and thus a regret which decreases again as $\mathcal{O}\left(1/\sqrt{n}\right)$.
In setting 3 (Fig.~\ref{fig:ucb.naive}-\textit{right}), the optimal allocation is very biased towards the second arm (see Table \ref{tab:exp.setting.k2}). However, \naive would target more the first arm attracted by its high mean upper bound (i.e., $\wt{\lambda}_1 \gg \wt{\lambda}_2$ in contrast with $\blambda^*$). As a result, its regret is much higher in this case than in the previous case. Actually, with the horizon of $n=5000$ the regret is constant (and thus the rescaled regret increases as $\wt{O}(\sqrt{n})$) and the algorithm does not seem to be able to recover from bad estimates of the variance.

\begin{figure}
\centering
\includegraphics[width=0.45\columnwidth]{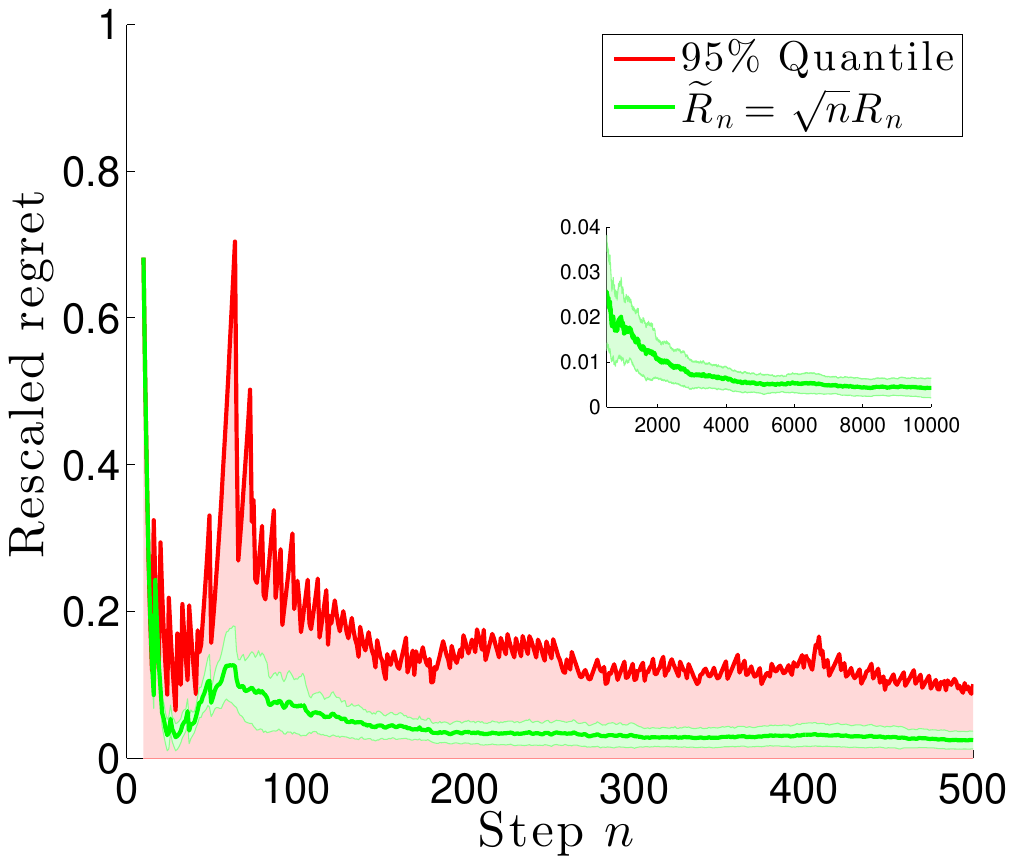}
\includegraphics[width=0.45\columnwidth]{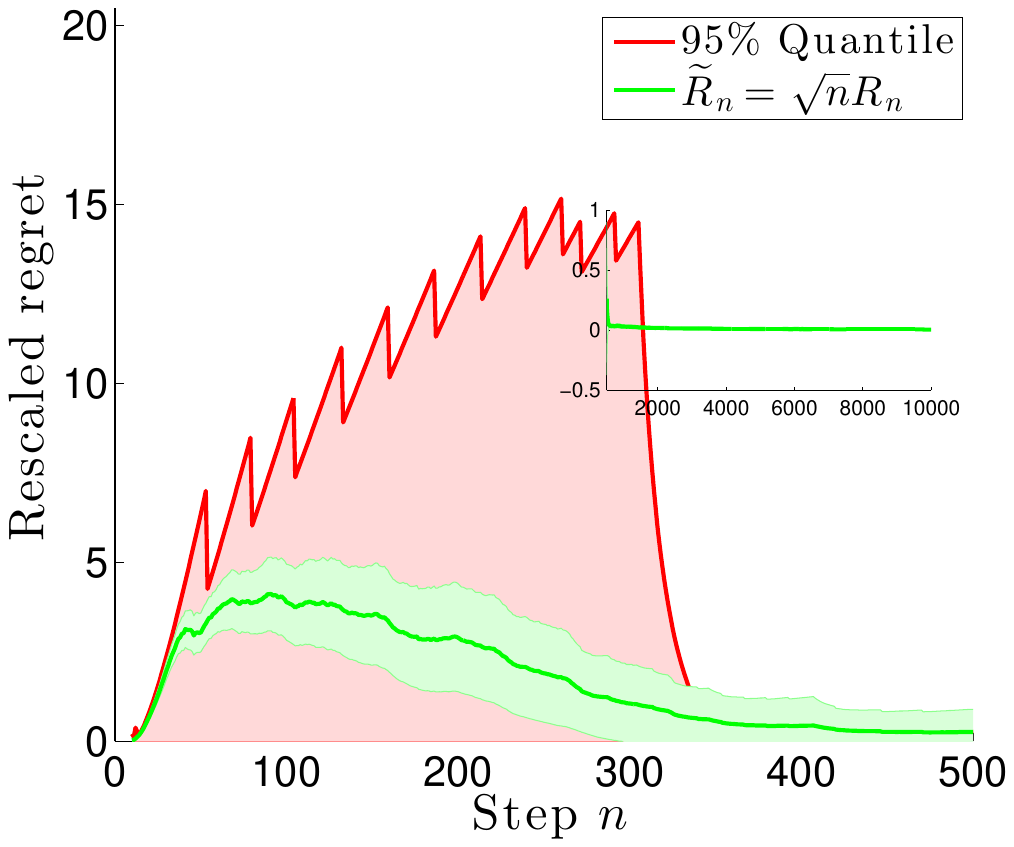}
\includegraphics[width=0.45\columnwidth]{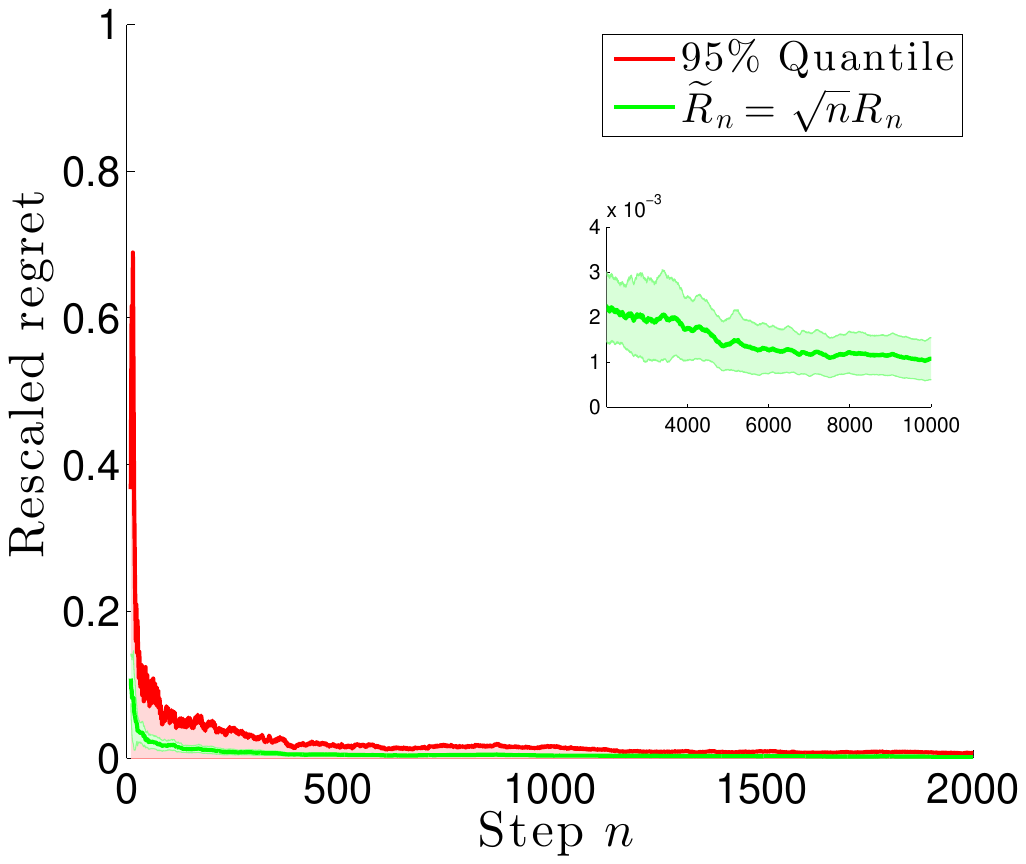}
\includegraphics[width=0.45\columnwidth]{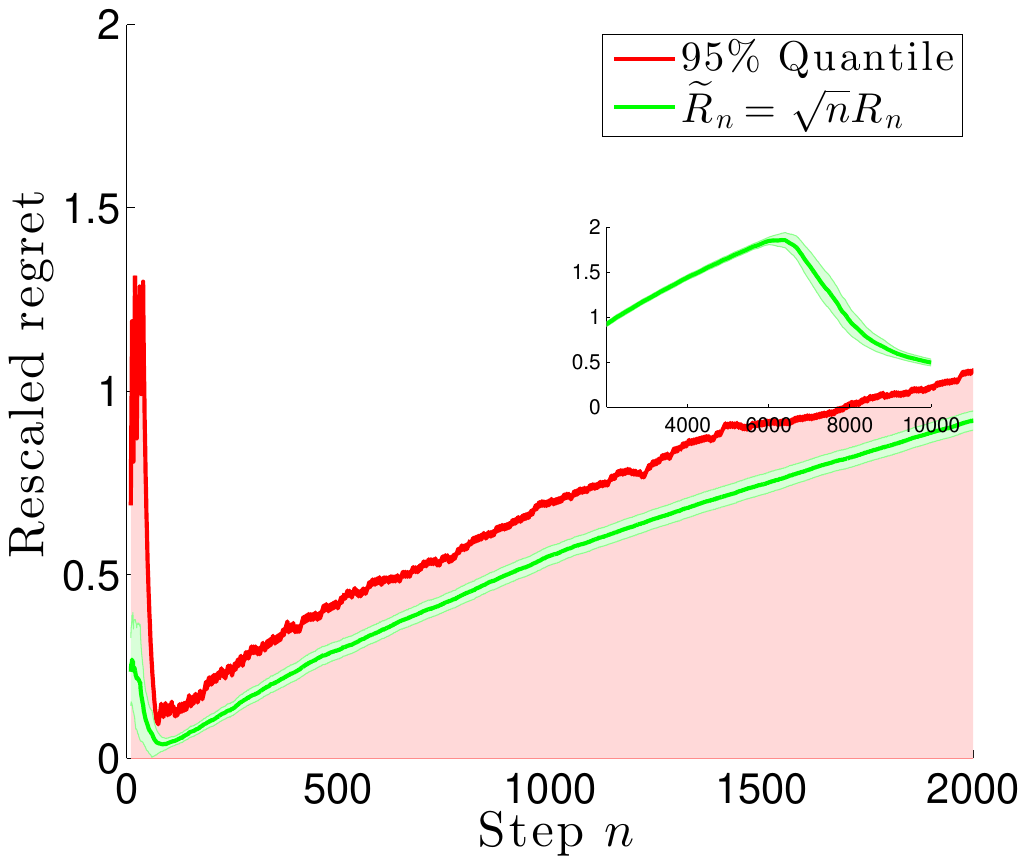}
\caption{Rescaled regret for \forcing and \naive on settings 2 and 3 of Table~\ref{tab:exp.setting.k2}. Notice the difference in the scale of the $x$ and $y$ axes.}
\label{fig:ucb.naive}
\end{figure}
\begin{figure}
\centering
\includegraphics[trim={1cm 0 0cm 0},width=0.32\columnwidth]{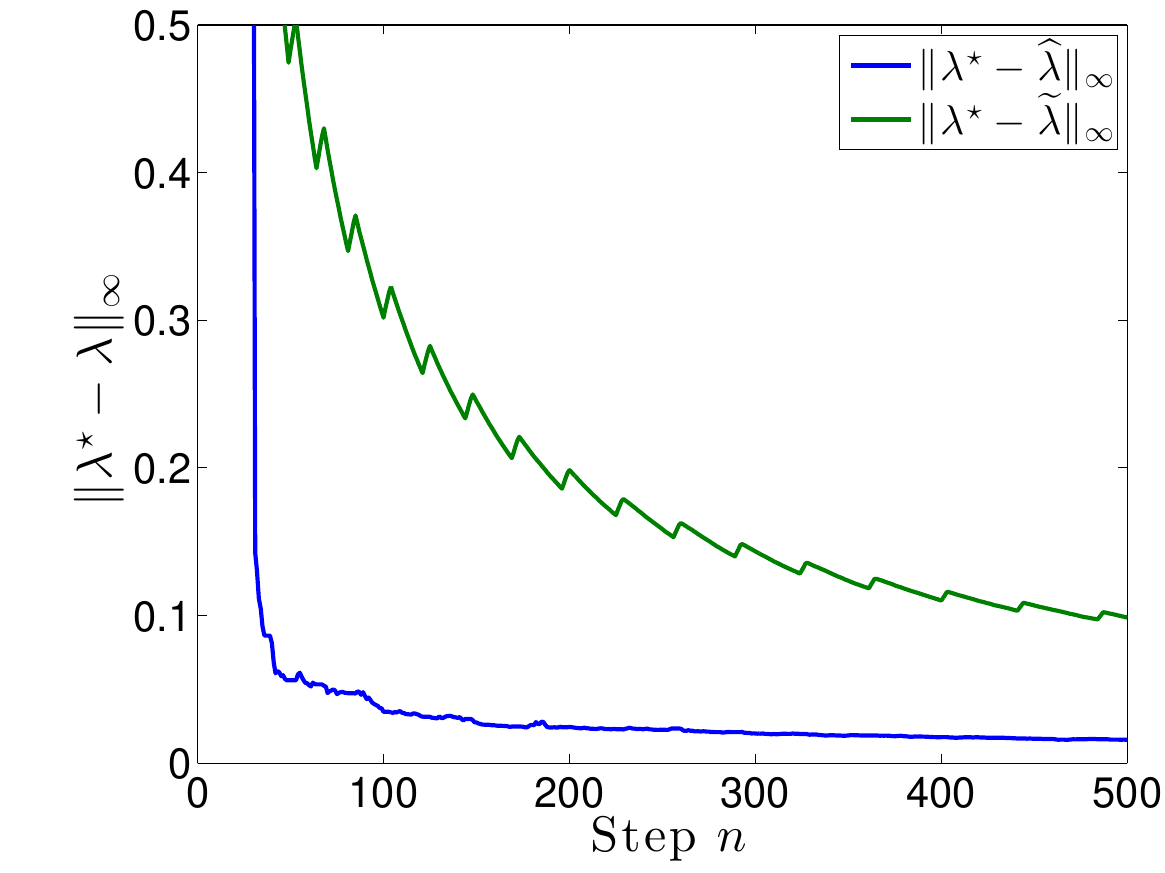}
\includegraphics[trim={1cm 0 0cm 0},width=0.32\columnwidth]{pics/ForcingAlgo_n10000_k1_track_arm4.pdf}
\includegraphics[trim={2cm 6.5cm 1cm 7.5cm},width=0.32\columnwidth]{pics/ForcingAlgo_n10000_k1_regret.pdf}
\includegraphics[trim={1cm 0 0cm 0},width=0.32\columnwidth]{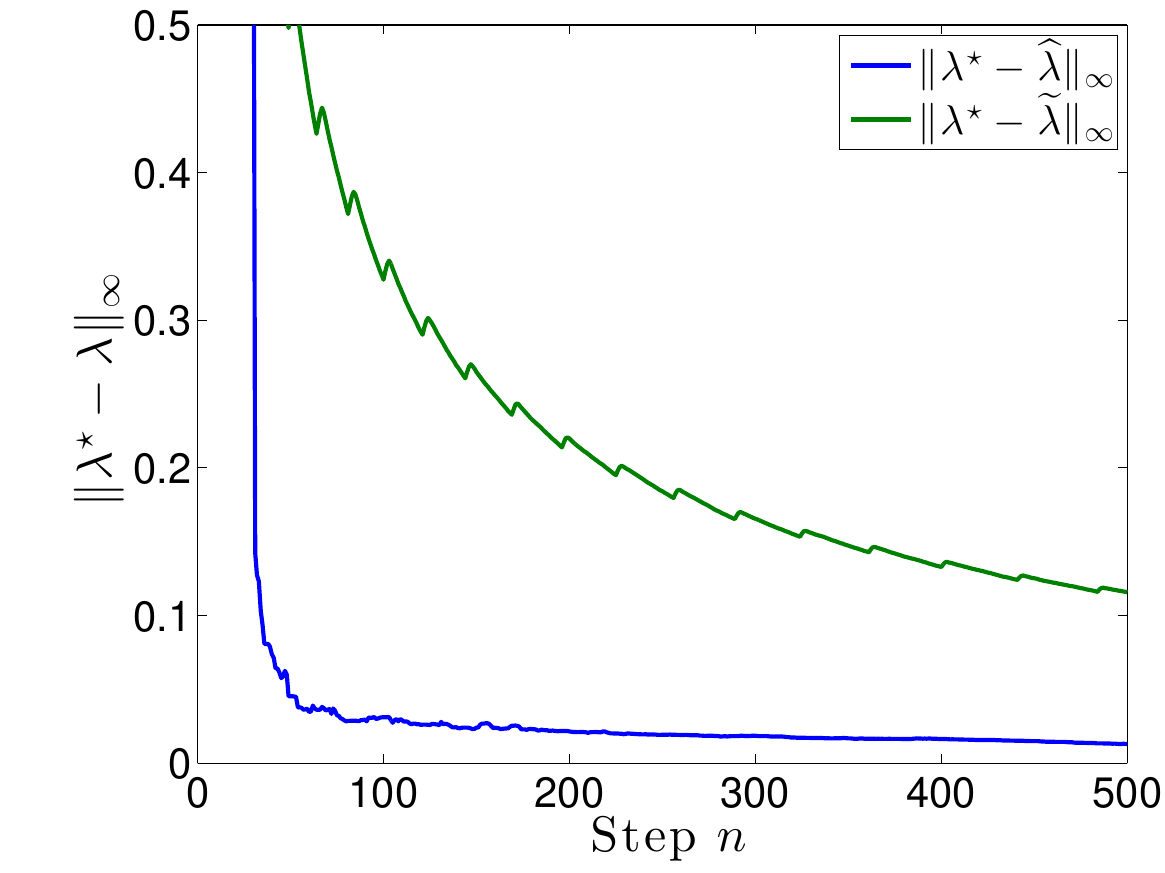}
\includegraphics[trim={1cm 0 0cm 0},width=0.32\columnwidth]{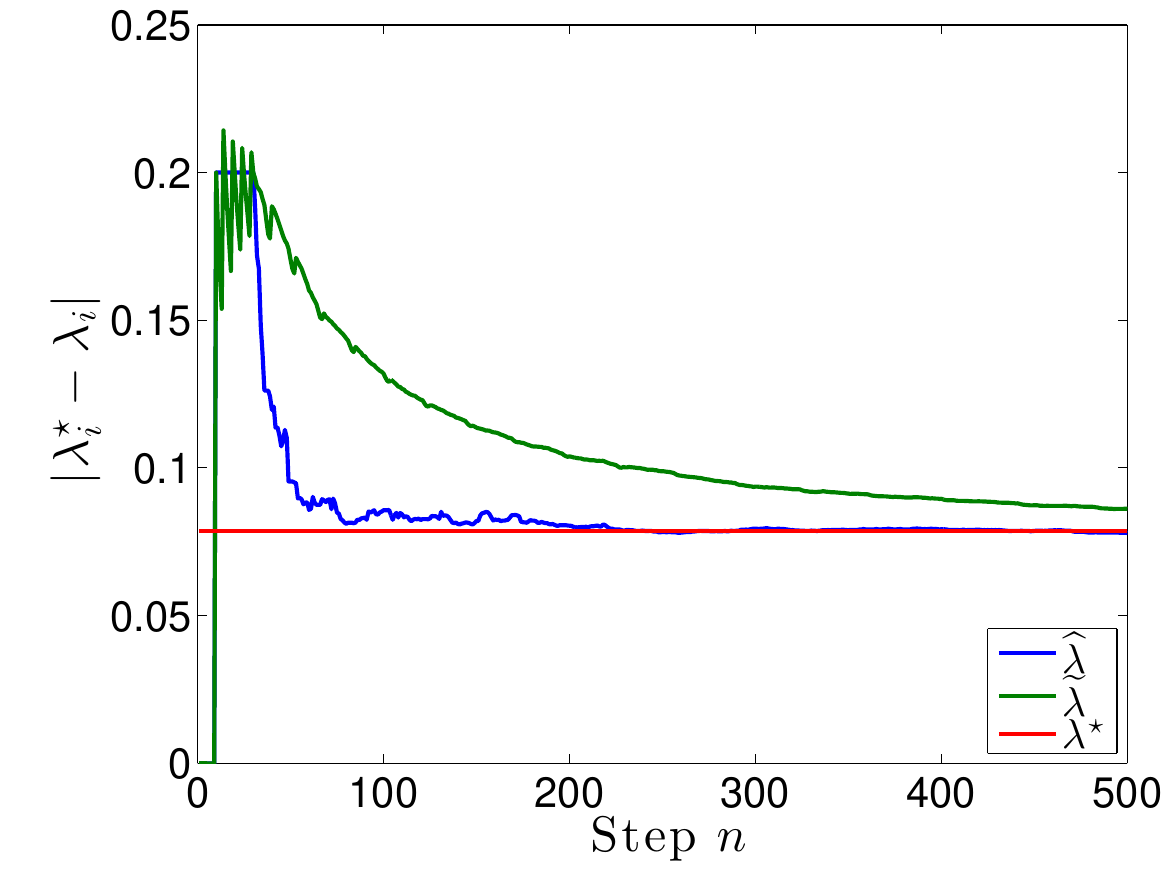}
\includegraphics[trim={2cm 6.5cm 1cm 7.5cm},width=0.32\columnwidth]{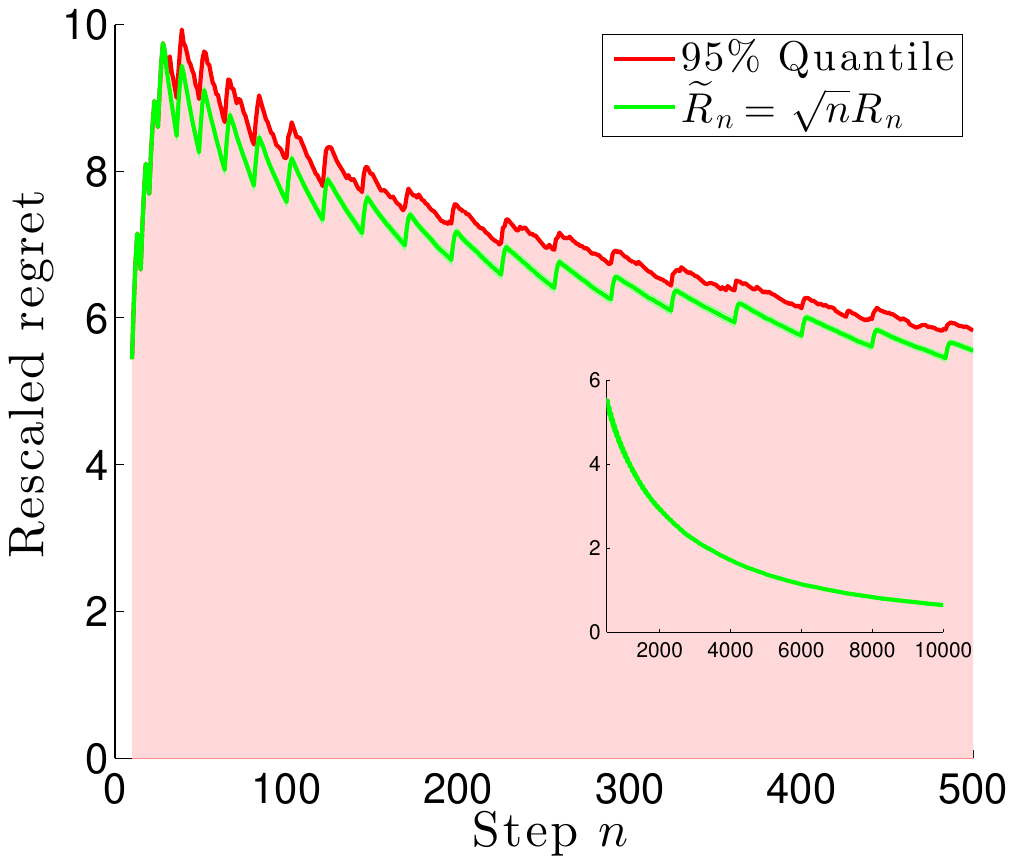}
\caption{Performance of \forcing with \textit{(top)} and without \textit{(bottom)} tracking step for the setting in Table~\ref{tab:exp.setting}. From left to right: $\ell_\infty$ error in approximating $\blambda^*$, error in approximating $\lambda_4^*$ and rescaled regret.}
\label{fig:tracking}
\end{figure}

\textbf{Tracking performance.}
Finally, we investigate the effect of the tracking strategy of \forcing by comparing it with an arm selection where $I_t$ is drawn at random from $\wh{\blambda}_t$. This version of \forcing does not try to compensate for the difference between the current allocation $\wt{\blambda}_t$ and the desired allocation $\wh{\blambda}_t$. We report the error in estimating $\blambda^*$ in $\ell_\infty$-norm for both $\wh{\blambda}_t$ and $\wt{\blambda}_t$ in the two configurations of \forcing. We can see in Fig.~\ref{fig:tracking}-\textit{(left/center)} that while $\wh{\blambda}$ is not affected by the tracking rule, $\wt{\blambda}$ is significantly slower in converging to $\blambda^*$ when the algorithm does not compensate for the mismatch is estimated optimal and actual allocations. Furthermore, this directly translates in a much higher regret as illustrated in Fig.~\ref{fig:tracking}-\textit{(right)}.

\end{document}